\def\citep{\cite}
\newtheorem{theorem}{Theorem}
\newtheorem{lemma}[theorem]{Lemma}
\newtheorem{corollary}[theorem]{Corollary}
\newtheorem{claim}[theorem]{Claim}
\theoremstyle{nonumberplain}\theoremsymbol{\ensuremath{\Box}}
\newtheorem{proof}{Proof.}
\theoremstyle{empty}
\newcommand{\union}{\cup}
\newcommand{\beq}{\begin{eqnarray}}
\newcommand{\eeq}{\end{eqnarray}}
\newcommand{\beqn}{\begin{equation}}
\newcommand{\eeqn}{\end{equation}}
\renewcommand{\hat}{\widehat}
\tikzstyle{decision} = [diamond,
\tikzstyle{block} = [rectangle,
\tikzstyle{line} = [draw,
\tikzstyle{cloud} = [draw,
\begin{document} 

\title{Convergence and Correctness of Max-Product Belief Propagation for Linear Programming\footnote{The preliminary conference version of this work was presented at
Conference on Uncertainty in Artificial Intelligence (UAI), 2015}}
\date{}
\author{
Sejun Park  \and 
Jinwoo Shin\thanks{S.\ Park and J.\ Shin are
with School of Electrical Engineering, Korea Advanced Institute of Science \& Technology, Republic of Korea.
Emails: \{sejun.park, jinwoos\}@kaist.ac.kr}}

%\author{Sejun Park, Jinwoo Shin\thanks{S.\ Park and J. Shin are with the Department of Electrical Engineering
%at
%Korea Advanced Institute of Science Technology, Republic of Korea.
%Authors' e-mails: \texttt{sejun.park@kaist.ac.kr, jinwoos@kaist.ac.kr}}
%\vspace{0.1in}\\% \\
%$~$
%Korea Advanced Institute of Science Technology}
%291 Daehak-ro, Yuseong-gu, Daejeon 305-701, Republic of Korea \\
%\texttt{sejun.park@kaist.ac.kr, jinwoos@kaist.ac.kr}}
%\address{Korea Advanced Institute of Science Technology, 291 Daehak-ro, Yuseong-gu, Daejeon 305-701, Republic of Korea}

% \author{Jinwoo Shin \\ 
% Korea Advanced Institute of Science Technology, \\
% 291 Daehak-ro, Yuseong-gu, Daejeon 305-701, Republic of Korea \\
% \texttt{jinwoos@kaist.ac.kr}}
% \address{Korea Advanced Institute of Science Technology, 291 Daehak-ro, Yuseong-gu, Daejeon 305-701, Republic of Korea}
\maketitle

% You may provide any keywords that you 
% find helpful for describing your paper; these are used to populate 
% the "keywords" metadata in the PDF but will not be shown in the document
% \icmlkeywords{subspace clustering, robust, convex relaxation, machine learning, ICML}

% \vskip 0.3in

%\def\colnum{1}

\begin{abstract}
%, which
%has been popularly used for %is a popular message-passing algorithm for computing 
%approximating a maximum-a-posteriori (MAP) assignment
%in a joint distribution represented by a graphical model (GM). 
The max-product {belief propagation} (BP) is 
a popular message-passing heuristic for approximating a maximum-a-posteriori (MAP) assignment
in a joint distribution represented by a graphical model (GM). 
In the past years, it has been shown that BP can solve
a few classes of linear programming (LP) formulations to combinatorial optimization problems
including maximum weight matching, shortest path and network flow, i.e., BP can
be used as a message-passing solver for certain combinatorial optimizations. However, those LPs and corresponding BP analysis are very sensitive
to underlying problem setups, and it has been not clear what extent these results can be generalized to.
In this paper, we obtain a generic criteria that BP converges to the optimal solution of given LP, and
show that it is satisfied in LP formulations associated to many classical combinatorial optimization problems including 
%Under the criteria, we prove that
%first rediscovers all known BP results on this line, e.g., maximum weight matching and shortest path, as special cases.
%Furthermore, we also establish new results that 
%BP converges to the optimal solution of LP 
maximum weight perfect matching, shortest path, traveling salesman, cycle packing, vertex/edge cover and network flow.
%where the last three are unknown in the literature. 
%Furthermore, our criteria provides a sufficient condition for unique BP fixed point and, 
%More importantly, our criteria can guide the BP design to compute 
%(a) similar results hold for new problems including traveling salesman, cycle packing and vertex cover
%and (b) 
%it provides a way to handle 
%fractional LP solutions, while most prior results focus on integral ones.
%, i.e.,
%BP is a distributed (and parallel) solver to the combinatorial optimization problems.
%In addition, while most prior known results on BP and LP focused on the case when LP has an integral solution,
%our result implies that BP can even solve LPs having fractional solutions.
%Our results provide new tools on BP analysis
%and 
%new directions on efficient solvers for large-scale LPs.
%There were few researches to solve integer linear programming (ILP) by using max-product BP.
%Previous researches found conditions that BP correctly converge to the solution of ILP only for few specific ILP problems. But there was no general condition for correctness of BP.
%In this paper, we suggest the sufficient conditions that max-product BP correctly converge to the solution of ILP.
%Our result covers many of previous BP works and some other interesting problems.
\end{abstract}

\section{Introduction}

\iffalse
Graphical model (GM) has been one of powerful paradigms for succinct representations of joint probability distributions in variety of scientific fields \citep{yedidia2005constructing,richardson2008modern,mezard2009information,wainwright2008graphical}. 
GM represents a joint distribution of some random vector to a graph structured model
where each vertex corresponds to a random variable and each edge captures to a conditional independence between random variables.
In many applications involving GMs, finding maximum-a-posteriori (MAP) assignment in GM is an important inference task, which
is known to be computationally intractable (i.e., NP-hard) in general \citep{chandrasekaran08com}.
\fi

The max-product belief propagation (BP) is the most popular heuristic for approximating a maximum-a-posteriori (MAP) assignment\footnote{In general,
MAP is NP-hard to compute exactly \citep{chandrasekaran08com}.} of given
graphical model (GM) \citep{yedidia2005constructing,richardson2008modern,mezard2009information,wainwright2008graphical}, where
its performance has been not well understood in loopy GMs, i.e., GM with cycles. 
Nevertheless, BP often shows remarkable performances even on loopy GM.
Distributed implementation, associated ease of programming
and strong parallelization potential are the main reasons for the growing popularity of the BP
algorithm. For example, several software architectures for implementing parallel BPs were recently proposed \citep{LowGKBGH10graph, gonzalez2010parallel, ma2012task}.
%by different research
%groups in machine learning communities.

%BP is a message passing iterative algorithm known to be exact if GM is a tree structure.
%BP is usually faster than other algorithms and it can be easily parallelized.
%In general, correct convergence of BP is not guaranteed. However, even though GM is not a tree, there are many applications that BP works well.

In the past years, there have been made extensive research efforts to understand BP performances on loopy GMs %behind its empirical success
under connections to combinatorial optimization \cite{bayati2005maximum, sanghavi2011belief, huang2007loopy, salez2009belief, bayati2011belief, shin2013graphical, ruozzi2008st, gamarnik2012belief, chandrasekaran2011counting, bandyopadhyay2006counting, sanghavi2009message}.
%Several characterizations of the max-product BP fixed points have been proposed \citep{weiss2001optimality, vinyals2010worst}, whereas they do not guarantee the BP convergence in general. 
In particular, it has been studied about the BP convergence to the correct answer %, in particular, 
under a few classes of loopy GM formulations of combinatorial optimization
problems: matching \citep{bayati2005maximum, sanghavi2011belief, huang2007loopy, salez2009belief}, perfect matching \citep{bayati2011belief}, 
matching with odd cycles \citep{shin2013graphical}, shortest path \citep{ruozzi2008st} and network flow \cite{gamarnik2012belief}.
 The important common feature of these instances is that BP converges to a correct MAP assignment if
linear programming (LP) relaxation of the MAP inference problem is tight, i.e., it has
no integrality gap. In other words, BP can be used as an efficient distributed solver for those LPs, and is presumably a better choice than
classical centralized LP solvers such as simplex methods \citep{dantzig1998linear}, interior point methods \citep{dantzig2006linear} and ellipsoid methods \citep{khachiyan1980polynomial}
for large-scale inputs.
However, these theoretical results on BP are very sensitive to underlying structural properties depending on specific problems
and it is not clear what extent they can be generalized to, e.g.,
the BP analysis for matching problems \citep{bayati2005maximum, sanghavi2011belief, huang2007loopy, salez2009belief}
does not extend to even for perfect matching ones \citep{bayati2011belief}.
In this paper, we overcome such technical difficulties for
enhancing the power of BP as a LP solver. % on BP analysis and design.

%There were few studies about convergence and correctness of BP mostly on classical combinatorial optimization problems with GM.
%However, there was no general conditions that BP correctly converges to the MAP assignment of GM.
%Previous researches about convergence and correctness of BP were done for specific problems include matching\citep{sanghavi2011belief, huang2007loopy}, perfect matching\citep{bayati2007belief}, 
%matching with odd cycles\citep{NIPS2013_4949} and shortest path\citep{ruozzi2008st}.
%There are common properties of previous works. One is that LP relaxation of ILP problem is tight(i.e. LP solution is integral)
%and another is that each random variable is connected with at most two factors in GM.
%We call the later condition as degree two(DT) condition.
%This two properties motivate us to find the general conditions of GM that BP correctly converges to the MAP assignment of GM.
%\vspace{-0.2in}

%\vspace{0.1in}

%\noindent{\bf Contribution.}
\subsection{Contribution}
We establish a generic criteria for GM formulations of given LP so that
BP converges to the optimal LP solution given arbitrary initialization. 
Consequently, it also provides a sufficient condition for guaranteeing that
a BP fixed point is unique.
As one can naturally expect given prior results,
one of our conditions requires the LP tightness. Our main contribution is finding
other sufficient generic conditions so that BP converges to the correct MAP assignment of GM.
%for correct convergence of BP on GM.
%we proved that if GM satisfies some conditions, then BP correctly converges to MAP assignment of GM.
%Our conditions are LP tightness, DT condition and one other condition on GM.
First of all, our generic criteria can rediscover all prior BP results on this line,
including matching \citep{bayati2005maximum, sanghavi2011belief,huang2007loopy}, perfect matching \citep{bayati2011belief}, 
matching with odd cycles \citep{shin2013graphical} and shortest path \citep{ruozzi2008st}, i.e., 
we provide a unified framework on establishing the convergence and correctness of BPs in relation to associated LPs.
Furthermore, we provide new instances under our framework: we show that BP can solve LP formulations associated to other popular
combinatorial optimizations including perfect matching with odd cycles,
traveling salesman, cycle packing, network flow and vertex/edge cover, which are not known in the literature.
Here, we remark that the same network flow problem was already studied using BP by Gamarnik et al. \cite{gamarnik2012belief}.
However, 
our BP is different from theirs and much simpler
to implement/analyze:  the authors study BP on continuous GMs,
and we do BP on discrete GMs.
While most prior known BP results on this line focused on the case when the associated LP has an integral solution,
the proposed criteria naturally guides the BP design to compute fractional LP solutions as well (see Section \ref{subsec:matching} 
and Section \ref{subsec:vc} for details).
%This shows a possibility that BP can be a generic solver of LP.
%\textcolor{red}{We also provide examples that BP can find exact LP solution whether LP is not tight.
%As every prior works required LP tightness for correct convergence of BP, 
%our result is a breakthrough that shows possibility of BP as a general LP solver.}
%and other applications.
%For finding sufficient conditions and proving our result, we mainly use the property of polytope and the computational tree technique which is developed and advanced in\citep{xx}.
%Computational tree technique is popular and powerful tool for max-product BP analysis.
%Our result provides a fast algorithm for some class of integer linear programming (ILP) by using BP.
%In practice, ILP solver cannot solve large scale problem as it cannot be parallelized.
%However, due to an easy parallelization of BP, our result shows that BP can be more powerful than ILP solver if GM satisfies our conditions.

Our proof technique is built on that of \cite{sanghavi2011belief} where
%that analyzes BP for the maximum weight matching problem, 
the authors construct
an alternating path in the computational tree induced by BP to analyze its performance
for the maximum weight matching problem. Such a trick needs specialized case studies depending on the associated LP 
when
the path reaches a leaf of the tree, and this is one of main reasons why it is not easy to generalize
to other problems beyond matching. %We carefully analyze polytopes associated to LPs for the issue,
The main technical contribution of this paper is providing a way to avoid the issue in the BP analysis
via carefully analyzing associated LP polytopes.
The main appeals of our results are providing not only tools on BP analysis, but also 
guidelines on BP design for its high performance, i.e.,
one can carefully design a BP given LP so that it satisfies the proposed criteria.
%We run such a BP for solving the famous traveling saleman problem (TSP), and
%our experiments show that BP outperforms other popular heuristics (see Section \ref{sec:exp}).
Our results provide not only new tools on BP analysis and design,
but also new directions on efficient distributed (and parallel) solvers for large-scale LPs
and combinatorial optimization problems.

%\vspace{0.1in}

%\noindent {\bf Organization.}
\subsection{Organization}
In Section \ref{sec:pre}, we introduce necessary backgrounds for the BP algorithm. % and its linear programming relaxation.
We provide the main result of the paper in Section \ref{sec:main}
and its several concrete applications to popular combinatorial optimizations are presented in
Section \ref{sec:applications}.
%are described in Section \ref{sec:applications}. In Section \ref{sec:exp}, we show empirical performances of BP algorithms %designed under our framework
%for solving TSP.
The proof of the main theorem is presented in Section \ref{sec:mainpf}. % and Section \ref{sec:corpf}.

%\vspace{-0.05in}
\section{Preliminaries}\label{sec:pre}
%\vspace{-0.05in}
%In the following three Subsections we introduce notations and discuss known results related to GM for MWM, Max Product BP algorithm, and relation between BP and the LP relaxation of MWM, respectively.

\subsection{Graphical Model} % and Belief Propagation}

A joint distribution of $n$ (binary) random variables $Z=[Z_i]\in \{0,1\}^n$ is called a graphical model (GM) if it factorizes as follows: for $z=[z_i]\in \Omega^n$,
\begin{equation*}
	\Pr[Z=z]~\propto~\prod_{i\in\{1,\dots,n\}}\psi_i(z_i)\prod_{\alpha\in F} \psi_{\alpha} (z_\alpha),\label{eq:generic_gm}
\end{equation*}
where $\{\psi_i,\psi_{\alpha}\}$ are (given) non-negative functions, the so-called factors; %, and
$F$ is a collection of subsets 
$$F=\{\alpha_1,\alpha_2,...,\alpha_k\}\subset 2^{\{1,2,\dots, n\}}$$
(each $\alpha_j$ is a subset of $\{1,2,\dots, n\}$ with $|\alpha_j|\ge 2$); $z_\alpha$ is the projection of $z$ onto
dimensions included in $\alpha$.\footnote{For example, if $z=[0,1,0]$ and $\alpha=\{1,3\}$, then $z_\alpha=[0,0]$.} 
In particular, $\psi_i$ is called a variable factor.
%If a joint distribution can be expressed as the above product-form, it is called a graphical model,
%also known as a Markov Random Field \cite{lauritzen96}.
Figure~\ref{fig:startup} depicts
the graphical relation between factors $F$ and variables $z$.

\vspace{0.1in}
\begin{figure}[h]
\centering
\includegraphics[width=0.3\textwidth]{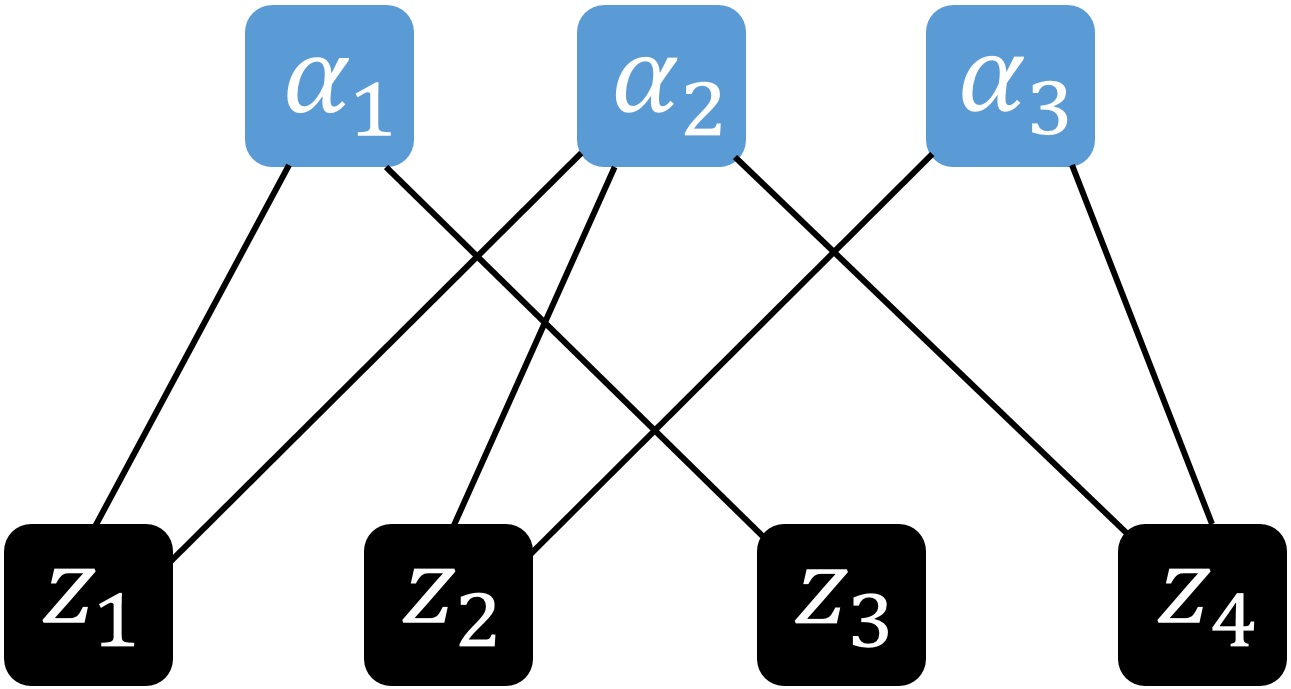}
%\footnotesize
\iffalse
\begin{tikzpicture}[node distance = 2cm, auto]

    \node [block] (a-one) {$\alpha_1$};
    \node [block, right of=a-one] (a-two) {$\alpha_2$};
    \node [block, right of=a-two] (a-three) {$\alpha_3$};

    \node [cloud, below left of=a-one] (z-one) {$z_1$};
    \node [cloud, right of=z-one] (z-two) {$z_2$};
    \node [cloud, right of=z-two] (z-three) {$z_3$};
    \node [cloud, right of=z-three] (z-four) {$z_4$};

    \path [line] (a-one) -- (z-one);
	\path [line] (z-one) -- (a-one);

	\path [line] (a-one) -- (z-three);
	\path [line] (z-three) -- (a-one);

	\path [line] (a-two) -- (z-one);
	\path [line] (z-one) -- (a-two);
	
    \path [line] (a-two) -- (z-two);
    \path [line] (z-two) -- (a-two);

    \path [line] (a-two) -- (z-four);
    \path [line] (z-four) -- (a-two);

    \path [line] (a-three) -- (z-two);
    \path [line] (z-two) -- (a-three);

    \path [line] (a-three) -- (z-three);
    \path [line] (z-three) -- (a-three);

    \path [line] (a-three) -- (z-four);
    \path [line] (z-four) -- (a-three);

\end{tikzpicture}
\fi

\caption{
%
%Bipartite 
Factor graph for the graphical model with $F=\{\alpha_1,\alpha_2,\alpha_3\}$ and $n=4$:
$\Pr[z] \propto \psi_{\alpha_1}(z_1,z_3)\psi_{\alpha_2}(z_1,z_2,z_4)\psi_{\alpha_3}(z_2,z_3,z_4).$
Each $\alpha_j$ selects a subset of $z$, e.g., $\alpha_1$ selects
$\{z_1,z_3\}$.
%
%Notice that the graph formed by $F$ and $\mathrm{z}$ is
%bi-partite.
%
}
\label{fig:startup}
\normalsize
\end{figure}
%
%The marginal probability, $\Pr[z_i=c]$ for $c\in \Omega$ %in all the possible $\mathrm{z}$
%is:
%\footnote{$\Pr[z_i=1]$ is called marginal probability because we are
%evaluating the probability over the original domain $\{0,1\}$ of the
%random variable $z$.}:
%
%\begin{equation*}
%\Pr[z_i=c] = \frac
%  {\sum_{\mathrm{z}\in{\Omega^n|z_i=c}}\prod_{i=1}^{n}\psi_i(z_i)
%                                 \prod_{\alpha \in F}\psi_{\alpha}(\mathrm{z}_{\alpha})}
%  {\sum_{\mathrm{z}\in{\Omega^n}}\prod_{i=1}^{n}\psi_i(z_i)
%                                 \prod_{\alpha \in F}\psi_{\alpha}(\mathrm{z}_{\alpha})}.
%\end{equation*}
%
Assignment ${z}^*$ is called a maximum-a-posteriori (MAP) assignment if ${z}^*$ satisfies
${z}^*=\arg\max_{{z}\in\{0,1\}^n} \Pr[{z}].$
This means that computing a MAP assignment requires comparing
$\Pr[{z}]$ for all possible $z$, which is typically
computationally intractable (i.e., NP-hard) unless the induced bipartite graph 
of factors $F$ and variables $z$, so-called factor graph, has a bounded treewidth \citep{chandrasekaran08com}.
%
%
%Clearly, there are $2^n$ ways to construct $\alpha$ as well.
%
%Furthermore, we define
%

%\vspace{-0.02in}
%\vspace{-0.05in}
\subsection{Max-Product Belief Propagation}
The (max-product) belief propagation (BP) algorithms are popular heuristics for approximating the MAP assignment in a graphical model.
% and $\Pr[z_i=0]$.
BP is an iterative procedure; at each iteration $t$, there are four %2$|\Omega|$
messages 
$$\{m^{t}_{\alpha\rightarrow i}(c), %$m^{t}_{\alpha\rightarrow i}(1)$,
m^{t}_{i\rightarrow\alpha}(c):
%$m^{t}_{i\rightarrow\alpha}(1)
c\in\{0,1\} \}$$
 between
every variable $z_i$ and every associated $\alpha\in F_i$, where
 $F_i:= \{\alpha\in F: i \in \alpha\}$; that is, $F_i$ is
a subset of $F$ such that all $\alpha$ in $F_i$ include the $i^{th}$
position of $z$ for any given $z$.
%Initially, we set $m^{0}_{{\alpha}\rightarrow i}(c)=m^{0}_{i\rightarrow {\alpha}}(c)=1$,
%and 
Then, messages are updated as follows:
\begin{align}
&\quad m^{t+1}_{\alpha\rightarrow i}(c) ~=~ \max_{z_{\alpha}:z_i=c}
                                   \psi_\alpha (z_{\alpha})
\prod_{j\in \alpha\setminus i} m_{j\rightarrow \alpha}^t (z_j)\label{eq:msg:alpha_to_i}\\
&\quad m^{t+1}_{i\rightarrow\alpha}(c) ~=~
%\psi_i(c)
\psi_i(c)\prod_{\alpha^{\prime}\in F_i\setminus \alpha} m_{\alpha^{\prime}
\rightarrow i}^t (c)\label{eq:msg:i_to_alpha}.
\end{align}
%
%On the other hand, the sum-product BP algorithm uses `sum' instead of `max' and updates messages as
%where the parameter $T>0$ in the sum-product BP algorithm is called `temperature' and typically chosen to 
%be close to $0$
%for the purpose of the MAP computation.
%Here the point is that the max-product BP and the sum-product BP use `max' and `sum' in message updates, respectively.

First, we note that each $z_i$ only sends messages to $F_i$; that is,
$z_i$ sends messages to $\alpha_j$ only if $\alpha_j$ selects/includes $i$.
%
%The messages exchanged between $\alpha_1$ and $(z_1,z_3)$ are depicted in
%Figure~\ref{fig:i_to_alpha}.
%
%Each $z_i$ sends a $0$-message and a $1$-message to each $\alpha_j$ to which
%it is connected;
%The computation of each message is shown in
%Equation~\eqref{eq:msg:i_to_alpha}.
%
The outer-term in the message computation \eqref{eq:msg:alpha_to_i} %in Equation~\eqref{eq:msg:alpha_to_i}
is maximized over all possible $z_\alpha\in\{0,1\}^{|\alpha|}$ with $z_i=c$.
%
%While this computation can (in worst case) require $O(2^n)$ time, typically,
%$\psi_\alpha$ is $0$ for many $\mathrm{z}\in\mathrm{Z}$, which allows us to
%compute these messages quickly.
%
The inner-term is a product that only depends on the variables $z_j$ (excluding $z_i$) that
are connected to $\alpha$. %; consequently, these can be computed quickly as well.
%
%In general, if $\alpha$ selects $k\ll{}n$ variables, then the number of
%possible inner-terms is $2^{k-1}$.
%
%Next, we describe BP updating rules for the messages sent from $\alpha$ to the variables
%($m^{t}_{\alpha_j\rightarrow{}z_i}$):
%
%\begin{eqnarray}
%m^{t+1}_{i\rightarrow\alpha}(0) &=&
%\psi_i(0)\prod_{\alpha^{\prime}\in F_i\setminus \alpha} m_{\alpha^{\prime}
%\rightarrow i}^t (0)\label{eq:msg:i_to_alpha-2}
%\end{eqnarray}
%
%Basically, to send a message from $\alpha_j$ to $z_i$, we use ?
The message-update \eqref{eq:msg:i_to_alpha} from a variable $z_i$ to a factor $\psi_\alpha$ is a product
which considers all messages received by $\psi_\alpha$ in the previous iteration,
except for the message sent by $z_i$ itself.
%
%The messages can be easily computed as the number of terms in the product is
%exactly the number of variables selected by each $\alpha$, which is typically
%very small ($k\ll{}n$).
%

One can reduce the complexity of messages by combining 
\eqref{eq:msg:alpha_to_i} and \eqref{eq:msg:i_to_alpha} as:
\begin{align*}
m^{t+1}_{i\rightarrow\alpha}(c) =
\psi_i(c)
\prod_{\alpha^{\prime}\in F_i\setminus \alpha} \max_{z_{\alpha^\prime}:z_i=c}
                                   \psi_{\alpha^\prime} (z_{\alpha^\prime})
\prod_{j\in \alpha^\prime\setminus i} m_{j\rightarrow \alpha^\prime}^t (z_j),
\end{align*}
which we analyze in this paper.
Finally, given a set of messages $\{m_{i\to\alpha}(c)$, $m_{\alpha\to
i}(c):c\in\{0,1\}\}$, the so-called BP marginal beliefs are computed as follows:
\begin{eqnarray}\label{eq:bpdecision}
b_i[z_i]&=&%\psi_i(z_i)
\prod_{\alpha\in F_i} m_{\alpha\to i}(z_i).\label{eq:marginalbelief}
%
%b_i[0]&\propto&\psi_i(0)\prod_{\alpha\in F_i} m_{\alpha\to i}(0), \\
%
%&&\Pr[z_i=0] + \Pr[z_i=1] = 1 \\
%
%b_\alpha[\mathrm{z}_\alpha]&\propto&\psi_\alpha(\mathrm{z}_\alpha)
%         \prod_{i\in\alpha} m_{i\to\alpha}(z_i), \label{eq:factorbelief}
%
%&&\sum_{\mathrm{z'}\in\mathrm{Z}} \Pr[\mathrm{z'}|\alpha] = 1
\end{eqnarray}
%where $\sum_{\mathrm{z}_i} b[\mathrm{z}_i] = 1$.
Then, the BP algorithm outputs $z^{BP}=[z_i^{BP}]$ as
$$
z_i^{BP}=\begin{cases}
1&\mbox{if}~ b_i[1]>b_i[0]\\
?&\mbox{if}~b_i[1]=b_i[0]\\
0&\mbox{if}~ b_i[1]<b_i[0]
\end{cases}.
$$
%In the above, we assume that no tie $b_i[1]=b_i[0]$ occurs, which can be guaranteed if one adds small `noise'
%to each factor function.
It is known that $z^{BP}$ converges to a MAP assignment 
%after \textcolor{red}{iterations corresponding to the length of the longest path},
if the factor graph 
is a tree and the MAP assignment is unique. However, if the graph has loops in it, the BP
algorithm has no guarantee to find a MAP assignment in general.
%and $\sum_{\mathrm{z}_\alpha} b[\mathrm{z}_\alpha] = 1$.
%For examples, $b_i[c]$ is an estimation for $\Pr[z_i=c]$.
%
%Although normalizing $\Pr[\mathrm{z}|\alpha]$ requires us to consider all the
%possible $\mathrm{z'}\in\mathrm{Z}$, as most $\alpha$ only select a small
%subset ($|\alpha|$) of the total number of variables ($n$), for sparse
%$\alpha$($\alpha\lll{}n$), this quantity can be quickly computed and
%normalized.

\iffalse
\subsection{Integer Programming and Linear Programming Relaxation}
Integer programming (IP) is an optimization problem in which all variables are integer and its objective function and constraints are linear.
We can formulate general IP as below:
\begin{equation}\label{eq:ip0}
\begin{split}
	&\mbox{minimize}\qquad~ w\cdot x\\
	&\mbox{subject to}\qquad Ax\le b\\
	&\qquad\qquad\qquad~ x=[x_i]\in \mathbb{Z}^n\\
	&\qquad\qquad\qquad~ A\in\mathbb{R}^{m\times n}, b\in\mathbb{R}^m.
\end{split}
\end{equation}
However, solving IP is NP-Hard. Hence, we usually solve relaxed problem called linear programming (LP) rather than IP.
In LP, we relax the integer constraint of variables as below.
\begin{equation}\label{eq:lp0}
\begin{split}
	&\mbox{minimize}\qquad~ w\cdot x\\
	&\mbox{subject to}\qquad Ax\le b\\
	&\qquad\qquad\qquad~ x=[x_i]\in \mathbb{R}^n\\
	&\qquad\qquad\qquad~ A\in\mathbb{R}^{m\times n}, b\in\mathbb{R}^m.
\end{split}
\end{equation}
\fi
%\pagebreak
%\vspace{-0.05in}
\section{Convergence and Correctness of Belief Propagation} % VIA LINEAR PROGRAMMING}%Convergence and Correctness of Belief Propagation via Linear Programming}
\label{sec:main}
\subsection{Convergence and Correctness Criteria of BP}
In this section, we provide the main result of this paper: a convergence and correctness criteria of BP.
%\subsection{Convergence and Correctness Conditions of BP}
Consider the following GM:  for $x=[x_i]\in \{0,1\}^n$ and $w=[w_i]\in \mathbb R^n$,
\begin{equation}
	\Pr[X=x]~\propto~\prod_{i} e^{-w_i x_i}\prod_{\alpha\in F} \psi_{\alpha} (x_\alpha),\label{eq:gm1}
\end{equation}
where $F$ is the set of non-variable factors and
the factor function $\psi_\alpha$ for $\alpha\in F$ is defined as
\begin{align*}
&\psi_{\alpha}(x_{\alpha}) = 
\begin{cases}
1&\mbox{if}~ A_{\alpha} x_{\alpha}\ge b_{\alpha}\\%,~ C_{\alpha}x_{\alpha}=d_{\alpha}\\
0&\mbox{otherwise}
\end{cases},
\end{align*}
for some
matrices $A_{\alpha}$ and vectors $b_{\alpha}$. %\textcolor{red}{
%Note that $\psi_\alpha(x_\alpha)$ can include the equality constraints by utilizing two inequalities.
 %}
Now we consider the linear programming (LP) corresponding the above GM:
\begin{equation}\label{eq:lp1}
\begin{split}
	&\mbox{minimize}\qquad~ w\cdot x\\
	&\mbox{subject to}\qquad A_\alpha x\ge b_\alpha~\text{for all $\alpha\in F$}\\
    &\qquad\qquad\qquad~ x=[x_i]\in [0,1]^n
\end{split}
\end{equation}
%\textcolor{red}{
To simplify the notation, we often use
%consider the following LP
%\begin{equation}\label{eq:lp1}
%\begin{split}
%	&\mbox{minimize}\qquad~ w\cdot x\\
%	&\mbox{subject to}\qquad Ax\ge b\\
%\end{split}
%\end{equation}
%where 
$Ax\ge b$ with $A\in\mathbb{R}^{m\times n},b\in\mathbb{R}^m$ which includes all inequalities $A_\alpha x\ge b_\alpha$ and $x\in[0,1]^n$.
%\textcolor{red}
{Without loss of generality, we assume that $\|A_{i*}\|_2=1$ for all $i=1,2,\dots,m$ throughout this paper, 	where $A_{i*}$ is the $i$-th row of $A$.
Similarly, we denote $A_{*i}$ as the $i$-th column of $A$.}
One can easily observe that the MAP assignments for GM \eqref{eq:gm1} corresponds to the (optimal) solution of LP \eqref{eq:lp1}
if the LP has an integral solution $x^*\in \{0,1\}^n$.
%\textcolor{red}{
Furthermore, if the solution of LP \eqref{eq:lp1} is unique, the there exists a positive constant $\rho$ satisfying the following identity:
\begin{equation*}
		\rho:=\inf_{x\in\mathcal{P}\setminus x^*} \frac{w\cdot x- w\cdot x^*}{\|x-x^*\|_1}>0.
\end{equation*}
%Now, we introduce our main theorem.}
Using the notation and observation, %as stated in the following theorem, 
we establish the following sufficient conditions
so that the max-product BP can indeed find the LP solution.

\iffalse
To establish the performance of BP on GM \eqref{eq:gm1} for solving IP \eqref{eq:ip1}, we also consider the following 
the LP (Linear Programming) relation to IP \eqref{eq:ip1}:
\begin{equation}\label{eq:lp1}
\begin{split}
	&\mbox{minimize}\qquad~ w\cdot x\\
	&\mbox{subject to}\qquad %A_{\alpha}x_{\alpha}\ge b_{\alpha},~ C_{\alpha}x_{\alpha}=d_{\alpha}
\psi_\alpha(x_\alpha)=1,\quad \forall \alpha\in F\\
&\qquad\qquad\qquad~ x=[x_i]\in [0,1]^n.
\end{split}
\end{equation}
\fi
%Now we are ready to state the main result of this paper.
%We will prove that if above LP has unique integral solution $x^*$ with some conditions, then the LP is solvable by belief propagation.
\begin{figure}[t]
    \centering
    \begin{subfigure}[b]{0.2\textwidth}
        \includegraphics[width=\textwidth]{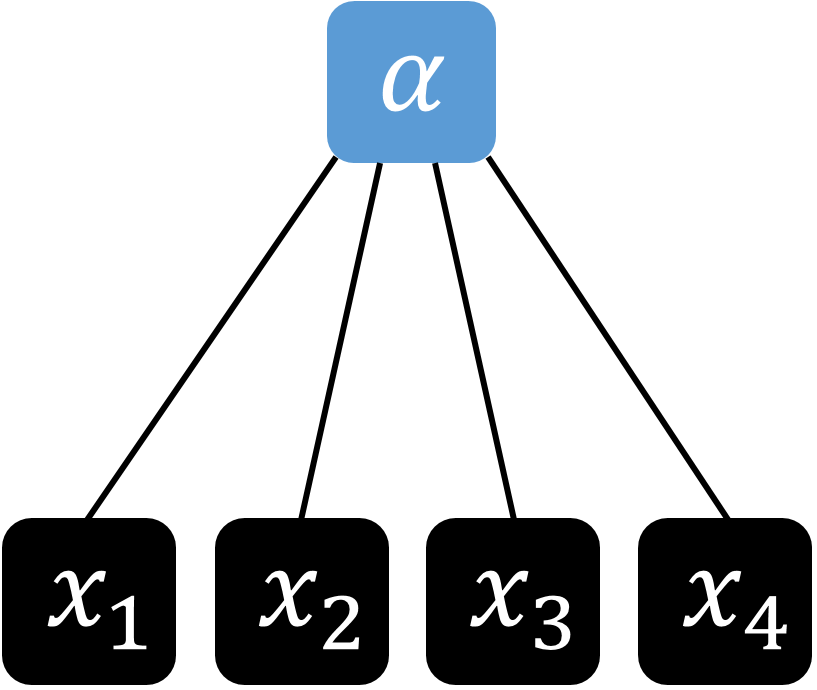}
        \caption{$x^*_\alpha$}
        %\label{fig:fbm}
    \end{subfigure}
    \quad
    \begin{subfigure}[b]{0.2\textwidth}
        \includegraphics[width=\textwidth]{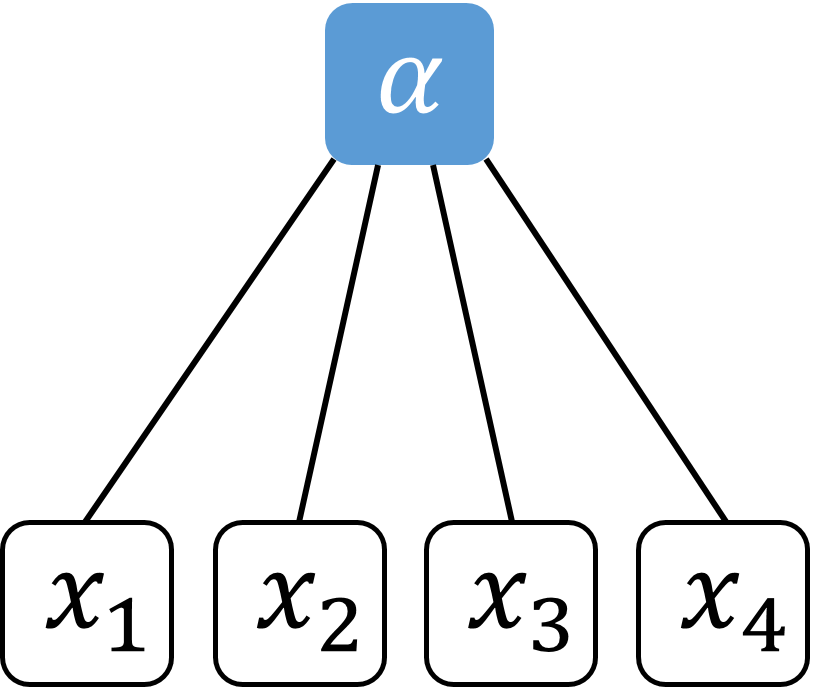}
        \caption{$x_\alpha$}
        %\label{fig:UCIm}
    \end{subfigure}
    \quad
    \begin{subfigure}[b]{0.2\textwidth}
        \includegraphics[width=\textwidth]{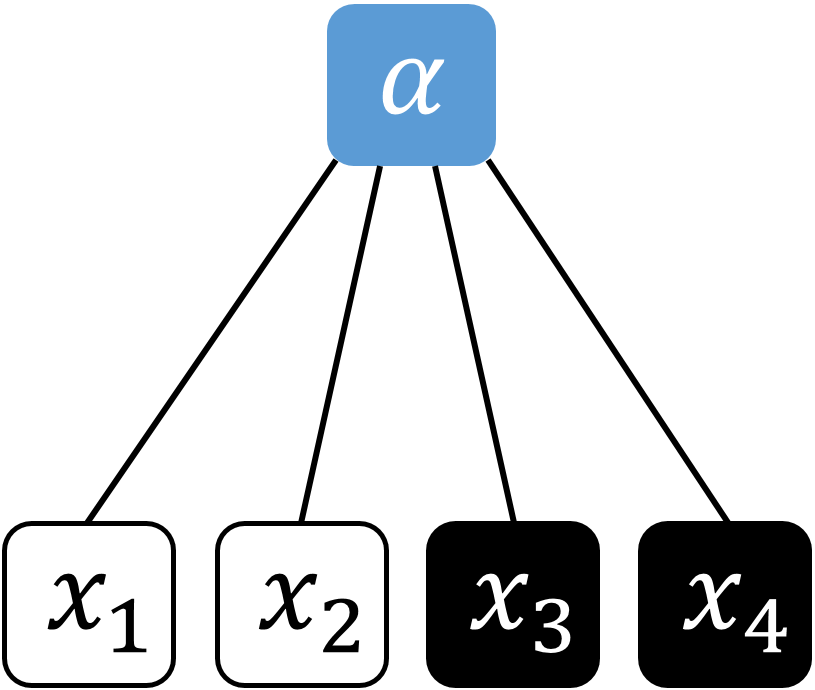}
        \caption{$x^\prime_\alpha$}
        %\label{fig:synm}
    \end{subfigure}
    \quad
    \begin{subfigure}[b]{0.2\textwidth}
        \includegraphics[width=\textwidth]{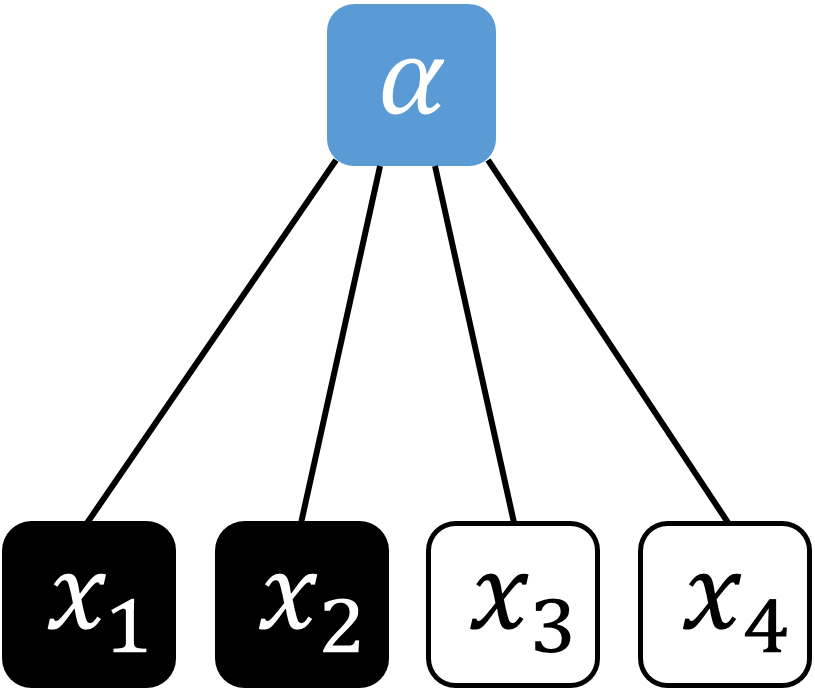}
        \caption{$x^{\prime\prime}_\alpha$}
        %\label{fig:synm}
    \end{subfigure}
    \caption{%\textcolor{red}{
    Illustration of Condition {\em C3} of Theorem \ref{thm:main}
    under $i=1,\gamma=\{2\}$ and $\psi_\alpha(x_\alpha)=1$ (i.e., say $\psi_\alpha$ satisfies) if and only if $\sum_{i\in\alpha}x_i=1$.
    All four variables $x^*_\alpha,x_\alpha,x^\prime_\alpha,x^{\prime\prime}_\alpha$ must satisfy $\psi_\alpha$. % to satisfy {\em C3}.
    For example, let %$\psi_\alpha(x_\alpha)=1$ if and only if $\sum_{i\in\alpha}x_i=1$, 
    $x^*_\alpha=(1,0,0,0)$ and $x_\alpha=(0,1,0,0)$.
    Then, both $x^\prime_\alpha=(x^*_1,x^*_2,x_3,x_4)$ and $x^{\prime\prime}_\alpha=(x_1,x_2,x^*_3,x^*_4)$ satisfy $\psi_\alpha$.}%}
    \label{fig:C3}
\end{figure}
\begin{theorem}\label{thm:main}
Suppose the following conditions hold:
\begin{itemize}
\item[C1.] LP \eqref{eq:lp1} has a unique integral solution $x^*\in\{0,1\}^n$, i.e., it is tight.
\item[C2.] For every $i\in \{1,2,\dots, n\}$, the number of factors associated with $x_i$ is at most two, i.e., $|F_i|\leq 2.$
\item[C3.] For every factor $\psi_\alpha$, every $x_\alpha\in\{0,1\}^{|\alpha|}$ with $\psi_\alpha(x_\alpha)=1$, and 
	every $i\in\alpha$ with $x_i\neq x^*_i$, 
	there exists %$i_1,\dots,j_m\in\alpha$ with $x_{j_c}\neq x^*_{j_c}$ such that
$\gamma\subset \alpha$ %with $x_{j_c}\neq x^*_{j_c}$ 
such that
$$|\{j \in\{i\}\union \gamma:|F_j|=2\}|\le 2$$
%$$|l\in\{i,j_1,\dots,j_m\}:|\beta\in F:l\in\beta|=2|\le 2$$
$$
\psi_\alpha(x^\prime_\alpha)=1,\qquad
%\mbox{and} \qquad
%A_\alpha x_\alpha\ge A_\alpha x^\prime_\alpha,
\mbox{
where %we define $x^\prime_\alpha\in\{0,1\}^{|\alpha|}$ as
$x^\prime_k = \begin{cases}
		x_k~&\mbox{if}~k\notin \{i\}\union \gamma\\
		x^*_k~&\mbox{otherwise}
\end{cases}$.}$$
$$
\psi_\alpha(x^{\prime\prime}_\alpha)=1,\qquad
%\mbox{and} \qquad
%A_\alpha x_\alpha\ge A_\alpha x^\prime_\alpha,
\mbox{
where %we define $x^\prime_\alpha\in\{0,1\}^{|\alpha|}$ as
$x^{\prime\prime}_k = \begin{cases}
		x_k~&\mbox{if}~k\in\{i\}\union \gamma\\
		x^*_k~&\mbox{otherwise}
\end{cases}$.}$$
%$$A_{\alpha}z^\prime_{\alpha}\ge b_{\alpha},~ C_{\alpha}z^\prime_{\alpha}=d_{\alpha},\quad \forall \alpha\in F.$$
    
\end{itemize}
Then the max-product BP on GM \eqref{eq:gm1} with arbitrary initial message converges to the solution of LP \eqref{eq:lp1} 
in $\left(\frac{w_{\max}}{\rho}+1\right)K$ iterations,
where\footnote{$A_\xi$ is a square matrix consisting of rows of $A$ corresponding to the row index set $\xi$,
and $\mathbf{1}$ is the vector consisting of ones.} 
$$K=\max_{\xi\subset\{1,\dots,m\}:|\xi|=n,\det(A_\xi)\ne 0} \|(A_\xi)^{-1}\mathbf{1}\|_1
\qquad\mbox{and}\qquad
w_{\max}=\max_j |w_j|.
$$
%\textcolor{red}{}
\end{theorem}
%\textcolor{red}{Condition {\em C3} is the only non-intuitive condition in Theorem \ref{thm:main}.
Figure \ref{fig:C3} illustrates Condition {\em C3}.
Since Theorem \ref{thm:main} holds for arbitrary initial messages, 
it also provides the uniqueness of BP fixed points, as stated in what follows.
\begin{corollary}\label{cor:fixedpoint}
The max-product BP on GM \eqref{eq:gm1} has a unique fixed point if Conditions C1-C3 hold.
\end{corollary}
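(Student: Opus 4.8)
The key observation is that Theorem~\ref{thm:main} asserts convergence to the (unique) LP solution \emph{for arbitrary initial messages}. I would argue by contradiction: suppose conditions C1, C2, C3 hold but BP admits two distinct fixed points, say $\{m^{(1)}\}$ and $\{m^{(2)}\}$. A fixed point of the BP update is, by definition, invariant under the message-update equations \eqref{eq:msg:alpha_to_i}--\eqref{eq:msg:i_to_alpha}. Hence if we \emph{initialize} BP at $\{m^{(1)}\}$, the message sequence is constant and equals $\{m^{(1)}\}$ for all iterations $t$; likewise initialization at $\{m^{(2)}\}$ produces the constant sequence $\{m^{(2)}\}$.

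\emph{Next} I would invoke Theorem~\ref{thm:main} twice, once for each of these two initializations. Since C1, C2, C3 are assumed to hold, Theorem~\ref{thm:main} guarantees that BP converges to \emph{the} solution of LP \eqref{eq:lp1}, which by C1 is the unique integral point $x^*$. Concretely, the convergence statement means that the limiting marginal beliefs $b_i[\cdot]$ from \eqref{eq:marginalbelief} decode to $z_i^{BP}=x_i^*$ for every $i$. Because each message sequence is constant, its limit is just the fixed point itself; therefore both $\{m^{(1)}\}$ and $\{m^{(2)}\}$ must decode, via \eqref{eq:marginalbelief}, to the same assignment $x^*$.

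The \emph{subtle point} — and the step I would be most careful about — is that decoding to the same $x^*$ does not literally force $m^{(1)}=m^{(2)}$ as vectors of messages, since BP messages are only defined up to a positive rescaling and the decoding rule in \eqref{eq:bpdecision} compares $b_i[1]$ against $b_i[0]$. So the cleanest formulation of ``unique fixed point'' here is uniqueness of the induced beliefs / decoded assignment rather than of the raw message values. I would therefore state and prove the corollary at the level of the normalized fixed point (equivalently, the belief ratios $b_i[1]/b_i[0]$), for which the constant-sequence argument gives genuine uniqueness: any fixed point, read as an initialization, must have its beliefs equal to the common limit dictated by Theorem~\ref{thm:main}. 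Since that limit is determined entirely by the LP data (through $x^*$) and is independent of which fixed point we started from, all fixed points induce identical beliefs, establishing the claim.

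**In summary**, the proof is a one-line reduction modulo the normalization caveat: a fixed point is a stationary initialization, Theorem~\ref{thm:main} forces every such initialization to the same limit, and a constant sequence equals its own limit, so the fixed point is unique (up to the scaling inherent in BP messages). The main obstacle is purely expositional — making precise the sense in which ``the fixed point'' is unique — rather than any new analytic difficulty, since all the substantive work is already carried by Theorem~\ref{thm:main}.
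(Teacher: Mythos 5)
Your proposal is correct and is essentially the paper's own argument: the paper justifies Corollary~\ref{cor:fixedpoint} with exactly this one-line reduction, namely that Theorem~\ref{thm:main} holds for arbitrary initial messages, so a fixed point, viewed as a stationary initialization, must converge to (and hence decode) the unique LP solution $x^*$. Your additional caveat about normalization --- that Theorem~\ref{thm:main} only pins down the decoded assignment, so uniqueness should be read at the level of decoded beliefs rather than raw message values --- is a precision the paper itself glosses over, and it is a fair one.
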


\subsection{Remarks on Theorem \ref{thm:main}}
Conditions {\em C2, C3} of Theorem \ref{thm:main} are typically easy to check given GM \eqref{eq:gm1} and
the uniqueness in {\em C1} can be easily guaranteed via adding random noises.
On the other hand, the integral property in {\em C1} requires to analyze LP \eqref{eq:lp1}, where it has been extensively studied
in the field of combinatorial optimization \citep{schrijver2003combinatorial}, e.g.,
Totally Unimodular (TUI). 
However, the conditions of Theorem \ref{thm:main} do not imply TUI, and vice versa.
Since the TUI condition requires all vertices of the LP polytope of constraints are integral,
Condition {\em C1} is, at least, not stronger than it. On the other hand, even if the LP polytope is TUI, 
{\em C3} might be not satisfied in general.
For example, consider the following TUI constraint:
% where constraint matrix is TUI
$$\begin{bmatrix}
1 & -1 & 1 & 0 & -1\\
-1 & 1 & -1 & 0 & 1\\
1 & 0 & 0 & 1 & 0\\
-1 & 0 & 0 & -1 & 0
\end{bmatrix}x_\alpha\ge
\begin{bmatrix}
0\\
0\\
1\\
-1
\end{bmatrix}.$$
In Condition {\em C3}, suppose $x_\alpha^*=(1,1,0,0,0)$ and $x_\alpha=(0,0,1,1,1)$.
%\textcolor{red}
{Then, one can easily observe that $\psi_\alpha(x_\alpha^*)=\psi_\alpha(x_\alpha)=1$.}
However, if we choose $i=1$, then there does not exist $\gamma$ satisfying Condition {\em C3}.
%Nevertheless, Theorem \ref{thm:main}
%provides important guidelines to design BP algorithms, irrespectively of the LP analysis. 

We also remark that 
for some special cases of GM \eqref{eq:gm1}, e.g.,
entries of $A$ consists of $\{0,\pm 1\}$,
Condition {\em C3} and the constant $K$ can be removed and simplified, respectively,
as stated in the following lemmas.
%On the other hand, there exists some condition for GM \eqref{eq:gm1} such that Condition {\em C3} can be abbreviated.
%The following lemma states such conditions formally.
\begin{lemma}\label{lem:c3}
Given GM \eqref{eq:gm1}, if every factor function $\psi_\alpha(\cdot)$ can be expressed as
$$\psi_\alpha(x_\alpha)=
\begin{cases}
&1\qquad\text{if}~L_\alpha\le a_\alpha\cdot x_\alpha\le U_\alpha\\
&0\qquad\text{otherwise}
\end{cases}$$
for some $a_\alpha\in\{-1,1\}^{|\alpha|}$ and $L_\alpha,U_\alpha\in\mathbb{R}$, then GM \eqref{eq:gm1} satisfies Condition C3 of Theorem \ref{thm:main}.
\end{lemma}
\begin{proof}
%In this proof, we verify Condition {\em C3} by explicitly choosing $\gamma$ for each $i$.
As stated in Condition {\em C3}, suppose that there are two assignments $x^*_\alpha\ne x_\alpha$ satisfying $\psi_\alpha(x^*_\alpha)=\psi_\alpha(x_\alpha)=1$ and $x_i\ne x^*_i$ for some $i\in\alpha$.
Since $a_\alpha\in\{-1,1\}^{|\alpha|}$, we have
$$a_\alpha\cdot (x^\dagger_\alpha-x_\alpha)\in\{-1,1\}$$ where $x^\dagger_\alpha$ is defined as
$$x^\dagger_k=\begin{cases}
&x^*_k\qquad\text{if}~k=i\\
&x_k\qquad\text{otherwise}
\end{cases},\qquad\qquad\mbox{for}~~k\in\alpha.$$
%To help the proof, 
There are two cases $a_\alpha\cdot (x^*_\alpha-x_\alpha)<0$ and $a_\alpha\cdot (x^*_\alpha-x_\alpha)\ge 0$.
We assume the case $a_\alpha\cdot (x^*_\alpha-x_\alpha)<0$ while the case $a_\alpha\cdot (x^*_\alpha-x_\alpha)\ge 0$ can be argued in a similar manner.
If $a_\alpha\cdot (x^\dagger_\alpha-x_\alpha)=-1$,
then choosing $\gamma=\emptyset$ results in
\begin{align*}
&L_\alpha\le a_\alpha\cdot x^\prime_\alpha=a_\alpha\cdot x_\alpha-1\le U_\alpha\\
&L_\alpha\le a_\alpha\cdot x^{\prime\prime}_\alpha=a_\alpha\cdot x^*_\alpha+1\le U_\alpha,
\end{align*} 
i.e., $\psi(x^\prime_\alpha)=\psi(x^{\prime\prime}_\alpha)=1$,
which satisfies Condition {\em C3}.
Now, suppose that $a_\alpha\cdot (x^\dagger_\alpha-x_\alpha)=1$.
Since we assumed $a_\alpha\cdot (x^*_\alpha-x_\alpha)<0$, there exists $j\in\alpha$ such that $x_j\ne x^*_j$ and $a_\alpha\cdot (x^\ddagger_\alpha-x_\alpha)=-1$ where $x^\ddagger_\alpha$ is defined as
$$x^\ddagger_k=\begin{cases}
&x^*_k\qquad\text{if}~k=j\\
&x_k\qquad\text{otherwise}
\end{cases},\qquad\qquad\mbox{for}~~k\in\alpha.$$
Then, choosing $\gamma = \{j\}$ results in
\begin{align*}
&L_\alpha\le a_\alpha\cdot x^\prime_\alpha=a_\alpha\cdot x_\alpha\le U_\alpha\\
&L_\alpha\le a_\alpha\cdot x^{\prime\prime}_\alpha=a_\alpha\cdot x^*_\alpha\le U_\alpha,
\end{align*}
i.e., $\psi(x^\prime_\alpha)=\psi(x^{\prime\prime}_\alpha)=1$,
which satisfies Condition {\em C3}.
This completes the proof of Lemma \ref{lem:c3}.
\end{proof}
%We note that most examples on graphs with variables on edges and factors on vertices with constrant matrix consisting of $\{0,\pm1\}$ satisfies the condition of Lemma \ref{lem:c3}.
%In addition, we introduce the following lemma for bounding the running time of BP.
\begin{lemma}\label{lem:polycomp}
If GM \eqref{eq:gm1} satisfies Conditions C1-C3 and entries of $A$ consists of $\{0,\pm1\}$, then $K\le n^{2.5}$ where $K$ is defined in Theorem \ref{thm:main}.
\end{lemma}
\begin{proof}
For any $n\times n$ invertible submatrix $A_\xi$ of $A$, 
it is known \cite{bolker2006simple} that every entry of $(A_\xi)^{-1}$ is in $\{0,\pm 1/2,\pm 1\}$.
This observation directly leads to
the following bound on $K$:
\begin{align*}
    K&=\max_{\xi\subset\{1,\dots,m\}:|\xi|=n,\det(A_\xi)\ne 0}
    \|(\widetilde A_\xi)^{-1}\mathbf{1}\|_1\\
    &\le \sqrt{n}\times\max_{\xi\subset\{1,\dots,m\}:|\xi|=n,\det(A_\xi)\ne 0}
    \|(A_\xi)^{-1}\mathbf{1}\|_1\\
    &\le n^{2.5}
\end{align*}
where $\widetilde A$ is a row scaled matrix of $A$ so that $\widetilde A_{i*}=c_iA_{i*}$ and $\|\widetilde A_{i*}\|_2=1$ for some constant $c_i$.
This completes the proof of Lemma \ref{lem:polycomp}.
\end{proof}

%\vspace{-0.05in}
\section{Applications of Theorem \ref{thm:main}}\label{sec:applications}
%In this section, 

In this section, we introduce concrete instances of LPs satisfying the conditions of Theorem \ref{thm:main}
so that BP correctly converges to its optimal solution.
% if the corresponding LP has integral solutions. % conditions of Theorem \ref{thm:main} are satisfied.
Specifically, we consider LP formulations associated to several
combinatorial optimization problems including
shortest path, maximum weight perfect matching,
% problem,
%. Secondly, we show matching related applications including the results of 
%\citep{huang2007loopy,sanghavi2011belief,bayati2011belief,NIPS2013_4949},
traveling salesman, maximum weight disjoint vertex cycle packing, vertex/edge cover and network flow.
%In all these problems, we show that BP converges to the correct solution of the associated LP formulation if LP is tight, i.e.,
%has an integral solution.
%Our applications cover prior works and other problems such as the traveling salesman problem and solving the maximum weight matching linear programming.
We note that the shortest path result, i.e., Corollary \ref{cor:shortestpath},
is known \citep{ruozzi2008st}, where we rediscover it as a corollary of Theorem \ref{thm:main}.
%under our generic framework.
Our other results, i.e., Corollaries \ref{cor:matching}-\ref{cor:networkflow},
%for maximum weight perfect matching,
%traveling salesman, maximum weight disjoint vertex cycle packing and vertex cover
are new and what we first establish in this paper.
%The above equation 
%This directly implies $K\le n^{2.5}$.

%\vspace{-0.05in}
\subsection{Example I: Shortest Path}\label{sec:shortest}
 %\citep{xx} is one of classical combinatorial optimization problems, where
Given a directed graph $G=(V,E)$ and non-negative edge weights $w=[w_e:e\in E]\in \mathbb R_+^{|E|}$, 
the shortest path problem
 is to find the shortest path from the source $s$ to the destination $t$: 
it minimizes the sum of edge weights along the path. One can naturally design the following LP for this problem:
\begin{equation}\label{lp:shortestpath}
\begin{split}
	&\mbox{minimize}\qquad~ w\cdot x\\
	&\mbox{subject to}\qquad \sum_{e\in \delta^o(v)} x_{e}-\sum_{e\in\delta^i(v)} x_{e}
	=
	\begin{cases}
		&1~~~\mbox{if}~v=s\\
		&-1~\mbox{if}~v=t\\
		&0~~~\mbox{otherwise}
	\end{cases}\\
	 %~\forall\,v\in V\\
	&\qquad\qquad\qquad~ x=[x_{e}]\in [0,1]^{|E|}.
\end{split}
\end{equation}

where $\delta^i(v),\delta^o(v)$ are sets of incoming, outgoing edges of $v$.
It is known that the above LP always has an integral solution, i.e., the shortest path from $s$ to $t$.
We consider the following GM for LP \eqref{lp:shortestpath}: %from the shortest path problem LP \eqref{lp:shortestpath} as below:
\begin{equation}\label{gm:shortestpath}
	\Pr[X=x]~\propto~\prod_{e\in E} e^{-w_{e} x_{e}}\prod_{v\in V} \psi_{v} (x_{\delta(v)}),
\end{equation}
where $\delta(v)=\delta^i(v)\cup\delta^o(v)$ and the factor function $\psi_v$ is defined as
\begin{align*}
&\psi_{v}(x_{\delta(v)}) = 
\begin{cases}
1&\mbox{if}~ \sum_{e\in\delta^o(v)} x_{e}-\sum_{e\in\delta^i(v)} x_{e}\\
&\qquad=
	\begin{cases}
		&1~~~\mbox{if}~v=s\\
		&-1~\mbox{if}~v=t\\
		&0~~~\mbox{otherwise}
	\end{cases}\\
0&\mbox{otherwise}
\end{cases}.
\end{align*}
For the above GM \eqref{gm:shortestpath}, one can easily check that Condition {\em C2} 
of Theorem \ref{thm:main} and the condition of Lemma \ref{lem:c3} hold. This directly leads to
the following corollary.
 % of Theorem \ref{thm:main}. %let us introduce a Corollary.
\begin{corollary}\label{cor:shortestpath}
%If the shortest path problem LP \eqref{lp:shortestpath} has a unique solution, then the max-product BP on GM \eqref{gm:shortestpath} converges to the solution of LP.
%Equivalently, 
If the shortest path from $s$ to $t$, i.e., the solution of the shortest path LP \eqref{lp:shortestpath}, is unique, then the max-product BP on GM \eqref{gm:shortestpath} converges in $O(w_{\max}|E|^{2.5}/\rho)$ iterations.
\end{corollary}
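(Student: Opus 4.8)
The plan is to derive Corollary~\ref{cor:shortestpath} as a direct instance of Theorem~\ref{thm:main}, so the entire task reduces to verifying that the shortest-path GM~\eqref{gm:shortestpath} satisfies conditions \emph{C1}, \emph{C2}, and \emph{C3}. Condition \emph{C1} is essentially handed to us: the hypothesis of the corollary assumes the shortest path is unique, and the classical total unimodularity of the node-arc incidence matrix guarantees that LP~\eqref{lp:shortestpath} always attains an integral optimum, so under the uniqueness assumption the LP has a unique integral solution $x^*\in\{0,1\}^{|E|}$. I would state this and cite the standard flow-polytope integrality result rather than reprove it.

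Condition \emph{C2} is immediate from the structure of the factor graph: each variable $x_e$ corresponds to a directed edge $e=(u,v)$, and the only factors containing $e$ are the conservation factors $\psi_u$ and $\psi_v$ at its two endpoints. Hence $|F_e|\le 2$ for every $e$, which is exactly \emph{C2}. I would spell this out in one or two sentences.

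The substance of the proof is checking \emph{C3}, and this is where I expect the real work to lie. Fix a factor $\psi_v$, an assignment $x_{\delta(v)}\in\{0,1\}^{|\delta(v)|}$ with $\psi_v(x_{\delta(v)})=1$ (i.e.\ a local flow-conservation-satisfying pattern at $v$), and an edge $e\in\delta(v)$ with $x_e\ne x^*_e$. Since both $x$ and $x^*$ satisfy conservation at $v$, the discrepancy $x_{\delta(v)}-x^*_{\delta(v)}$ is itself a circulation-type vector at $v$: the net signed flow of the difference through $v$ is zero. The idea is that flipping a single edge $e$ away from $x^*$ destroys conservation by exactly $\pm 1$, so there must exist a companion edge $e'\in\delta(v)$ that also disagrees with $x^*$ and whose flip restores the balance. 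I would set $\gamma=\{e'\}$, so that $\{e\}\cup\gamma$ consists of two edges; correcting both of them simultaneously (or leaving both uncorrected) preserves conservation, which is precisely what the two required conditions $\psi_v(x'_{\delta(v)})=1$ and $\psi_v(x''_{\delta(v)})=1$ demand. The cardinality constraint $|\{j\in\{e\}\cup\gamma:|F_j|=2\}|\le 2$ then holds trivially because $\{e\}\cup\gamma$ has only two elements. The main obstacle is making the selection of $\gamma$ fully rigorous across all cases: one must argue from the sign of the conservation equation (distinguishing the source, sink, and interior cases, and whether $e$ is incoming or outgoing at $v$) that a suitable balancing edge $e'$ always exists in $\delta(v)$, and that choosing it as a singleton indeed satisfies both $x'$ and $x''$ conditions. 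I would organize this as a short case analysis on the sign of $x_e-x^*_e$ and the orientation of $e$, show the induced imbalance is $\pm1$, and invoke conservation of the difference vector to produce $e'$; once \emph{C3} is established, Theorem~\ref{thm:main} applies verbatim and the corollary follows.
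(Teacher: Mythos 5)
Your proposal is correct and follows essentially the same route as the paper's own proof: reduce to Theorem~\ref{thm:main}, get \emph{C1} from the known integrality of the flow polytope plus the uniqueness hypothesis, get \emph{C2} from each edge having exactly two endpoint factors, and verify \emph{C3} by choosing $\gamma=\{e'\}$ for a single balancing edge $e'$ that also disagrees with $x^*$. The paper carries out the existence of $e'$ by an explicit case enumeration (incoming/outgoing, $x_e=0$ or $1$, with $v=s,t$ handled ``similarly''), whereas you phrase it via zero net signed flow of the difference vector $x_{\delta(v)}-x^*_{\delta(v)}$; this is the same argument, just organized more compactly.
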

The uniqueness condition in the above corollary is easy to guarantee by adding small random noises to edge weights.
%The proof of Corollary \ref{cor:shortestpath} is presented in the appendix.
%We note that the conclusion of Corollary \ref{cor:shortestpath} 
%is known \citep{ruozzi2008st}, where we rediscover it under our generic framework.

%\vspace{-0.05in}
\subsection{Example II: Maximum Weight Perfect Matching}\label{subsec:matching}
%The maximum weight perfect matching problem \citep{xx} is 
Given an undirected graph $G=(V,E)$ and non-negative edge
weights $w=[w_e:e\in E]\in \mathbb{R}_+^{|E|}$ on edges, the maximum weight perfect matching problem is to find a set of edges such that each vertex is connected to exactly one edge in the set and the sum of edge weights in the set is maximized. One can naturally design the following LP for this problem:
\begin{equation}\label{lp:matching}
\begin{split}
	&\mbox{maximize}\qquad~ w\cdot x\\
	&\mbox{subject to}\qquad \sum_{e\in\delta(v)} x_e= 1\\%,\quad\forall\, v\in V\\
	&\qquad\qquad\qquad~ x=[x_e]\in [0,1]^{|E|}.
\end{split}
\end{equation}
where $\delta(v)$ is the set of edges connected to a vertex $v$. 
If the above LP has an integral solution, it corresponds to the solution of the maximum weight perfect matching problem.\\\\
It is known that %for each vertex $x$ of feasible solution polytope of 
the maximum weight matching LP \eqref{lp:matching}
always has a half-integral solution $x^*\in \{0,\frac12,1\}^{|E|}$. We will design BP for obtaining the half-integral solution.
To this end, duplicate each edge $e$ to $e_1,e_2$ and define a new graph $G^\prime=(V,E^\prime)$ where $E^\prime=\{e_1,e_2:e\in E\}$.
Then, we suggest the following equivalent LP that always have an integral solution:
%make an LP on $G^\prime$ which always have non-fractional solution.
\begin{equation}\label{lp:modmatching}
\begin{split}
	&\mbox{maximize}\qquad~ w^\prime\cdot x\\
	&\mbox{subject to}\qquad \sum_{e_i\in\delta(v)} x_{e_i}= 2\\%\qquad\forall\,v\in V\\
	&\qquad\qquad\qquad~ x=[x_{e_i}]\in [0,1]^{|E^\prime|}.
\end{split}
\end{equation}
where $w^\prime_{e_1}=w^\prime_{e_2}=w_e$. %-\varepsilon$. 
One can easily observe that solving LP \eqref{lp:modmatching} is equivalent to solving LP \eqref{lp:matching}
due to our construction of $G^\prime$ and $w^\prime$.
%We introduce a small $\varepsilon$ to prevent the unnecessary multiple solutions due to the symmetry of $e_1,e_2$.
%Then, by setting $x_e=(x_{e_1}+x_{e_2})/2$, the solution of LP \eqref{lp:modmatching} is a solution of LP \eqref{lp:matching} and 
%LP \eqref{lp:modmatching} has unique integral solution if and only if LP \eqref{lp:matching} has unique solution.
Now, construct the following GM for LP \eqref{lp:modmatching}: % as below:
\begin{equation}\label{gm:modmatching}
	\Pr[X=x]~\propto~\prod_{e_i\in E^\prime} e^{w^\prime_{e_i} x_{e_i}}\prod_{v\in V} \psi_{v} (x_{\delta(v)}),
\end{equation}
where the factor function $\psi_v$ is defined as
\begin{align*}
&\psi_{v}(x_{\delta(v)}) = 
\begin{cases}
1&\mbox{if}~ \sum_{e_i\in\delta(v)} x_{e_i}= 2\\
0&\mbox{otherwise}
\end{cases}.
\end{align*}
For the above GM \eqref{gm:modmatching}, one can easily check that Condition {\em C2} 
of Theorem \ref{thm:main} and the condition of Lemma \ref{lem:c3} hold. This directly leads to
the following corollary.
\begin{corollary}\label{cor:matching}
If the solution of the maximum weight perfect matching LP \eqref{lp:modmatching} is unique, then 
the max-product BP on GM \eqref{gm:modmatching} converges in $O(w_{\max}|E|^{2.5}/\rho)$ iterations. %to the solution of LP \eqref{lp:matching}
\end{corollary}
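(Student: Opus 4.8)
The plan is to obtain Corollary \ref{cor:matching} as a direct instance of Theorem \ref{thm:main}, by verifying the three conditions C1, C2, C3 for the GM \eqref{gm:modmatching} and then invoking the theorem. Since perfect matching is a maximization problem, I would first cast it into the minimization template \eqref{eq:gm1} by taking the generic weight vector to be $-w'$; then $e^{-(-w'_{e_i})x_{e_i}}=e^{w'_{e_i}x_{e_i}}$ and minimizing $(-w')\cdot x$ over the feasible set of \eqref{lp:modmatching} is exactly the maximization in \eqref{lp:modmatching}, while each $\psi_v$ already has the indicator form demanded by \eqref{eq:gm1}. Condition C2 is then immediate: each variable $x_{e_i}$ is a copy of an edge $e=(u,v)$, and the only factors that contain it are the two vertex factors $\psi_u,\psi_v$, so $|F_{e_i}|=2$ for every variable.

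For C1 I would invoke the classical half-integrality of the fractional perfect matching polytope: LP \eqref{lp:matching} always admits a half-integral optimum $x^*\in\{0,\tfrac12,1\}^{|E|}$ whose support decomposes into a disjoint union of value-$1$ edges and odd cycles carrying value $\tfrac12$. Lifting $x^*$ to $G'$ by setting both copies of a value-$1$ edge to $1$ and, on each odd cycle, selecting one copy of every cycle edge while zeroing the other, yields an integral feasible point of \eqref{lp:modmatching}: every vertex of a value-$1$ edge or of an odd cycle receives degree exactly $2$. A short edge-by-edge check shows this integral point has the same objective as the naive doubling $y_{e_i}=x^*_e$, and since $\mathrm{OPT}_{\eqref{lp:modmatching}}=2\,\mathrm{OPT}_{\eqref{lp:matching}}$, the doubling—and hence the integral lift—is optimal. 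Therefore \eqref{lp:modmatching} always has an integral optimum; under the corollary's hypothesis that the optimum is unique, that unique optimum must be this integral point, which is precisely C1.

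The heart of the argument is C3, which I would reduce to a one-line counting fact. Fix a vertex factor $\psi_v$, a configuration $x$ with $\sum_{e_i\in\delta(v)}x_{e_i}=2$, and an index $i\in\delta(v)$ with $x_i\neq x^*_i$. Writing $S=\{i\}\cup\gamma$, a direct computation of $\sum_{\delta(v)}x'$ and $\sum_{\delta(v)}x''$ shows that \emph{both} requested identities $\psi_v(x'_{\delta(v)})=\psi_v(x''_{\delta(v)})=1$ hold exactly when $\sum_{k\in S}x_k=\sum_{k\in S}x^*_k$. Because every variable satisfies $|F_j|=2$, the cardinality requirement $|\{j\in S:|F_j|=2\}|\le 2$ collapses to $|S|\le 2$, so $\gamma$ must be a single index $\{i'\}$ with $i'\in\delta(v)\setminus\{i\}$, and the balance condition becomes $x_i+x_{i'}=x^*_i+x^*_{i'}$. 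Since both $x$ and $x^*$ meet the degree-two constraint at $v$, the differences obey $\sum_{e_i\in\delta(v)}(x_{e_i}-x^*_{e_i})=0$ with every term in $\{-1,0,+1\}$; hence a nonzero difference at $i$ is always compensated by an opposite nonzero difference at some $i'$, giving the required $\gamma=\{i'\}$ and establishing C3.

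I expect the main obstacle to be C1 rather than C3: the cardinality bound and the two flip identities are elementary once the reduction to $\sum_{k\in S}x_k=\sum_{k\in S}x^*_k$ is made, whereas C1 rests on the structural half-integrality theorem for the matching polytope together with the nontrivial point that the edge-duplication lift preserves \emph{optimality}, not merely feasibility. I would therefore spend most of the care on the objective-preservation step of the lift, since it is exactly what upgrades a feasible integral point to an optimal one and, combined with the uniqueness hypothesis, pins down the unique integral optimum required by Theorem \ref{thm:main}.
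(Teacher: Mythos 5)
Your proposal is correct and follows essentially the same route as the paper's proof: verify \emph{C2} (each duplicated edge variable touches exactly the two vertex factors of its endpoints) and \emph{C3} (since both $x$ and $x^*$ sum to $2$ on $\delta(v)$, a $\pm 1$ discrepancy at $i$ is compensated by an opposite discrepancy at some $i'$, so $\gamma=\{i'\}$ works), then invoke Theorem \ref{thm:main}. The only difference is that you prove in detail the integrality fact behind \emph{C1} (the lift of a half-integral optimum of LP \eqref{lp:matching} to an integral optimum of LP \eqref{lp:modmatching} via selecting one copy per odd-cycle edge), which the paper asserts without proof in Section \ref{subsec:matching} rather than inside the corollary's proof.
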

Again, the uniqueness condition in the above corollary is easy to guarantee by adding small random noises to edge weights $[w^\prime_{e_i}]$.
%and its formal proof is presented in the supplementary material due to the space constraint.
We note that it is known \citep{bayati2011belief} that BP converges to
the unique and integral solution of LP \eqref{lp:matching}, while Corollary \ref{cor:matching}
implies that BP can solve it without the integrality condition of LP \eqref{lp:matching} by solving GM \eqref{gm:modmatching}.
%The proof of Corollary \ref{cor:matching} is presented in the appendix.
We note that one can easily obtain a similar result for the maximum weight (non-perfect) matching problem, where
we omit the details in this paper.

%\vspace{-0.05in}
\subsection{Example III: Maximum Weight Perfect Matching with Odd Cycles}
In previous section we prove that BP converges to the optimal (possibly, fractional) 
solution of LP \eqref{lp:modmatching}, equivalently LP \eqref{lp:matching}.
%if LP \eqref{lp:matching} has a unique solution.
One can add odd cycle (also called Blossom) constraints and make those LPs tight i.e. solves the maximum weight perfect matching problem:
\begin{equation}\label{lp:matchingcycle}
\begin{split}
	&\mbox{maximize}\qquad~ w\cdot x\\
	&\mbox{subject to}\qquad \sum_{e\in\delta(v)} x_e= 1,\quad\forall\, v\in V\\
&\qquad\qquad\qquad~ \sum_{e\in C} x_e\le \frac{|C|-1}{2}, \quad\forall C\in\mathcal C,\\
	&\qquad\qquad\qquad~ x=[x_e]\in [0,1]^{|E|}.
\end{split}
\end{equation}
where $\mathcal C$ is a set of odd cycles in $G$.
%The constraint is as follows:
%$$$$
%where $C\subset E$ is an odd cycle.
The authors \citep{shin2013graphical} study BP for solving
LP \eqref{lp:matchingcycle} by replacing $\sum_{e\in\delta(v)} x_e= 1$ by $\sum_{e\in\delta(v)} x_e\leq 1$, i.e., 
for the maximum weight (non-perfect) matching problem.
Using Theorem \ref{thm:main}, one can extend the result to 
the maximum weight perfect matching problem, i.e., solving LP \eqref{lp:matchingcycle}.
%, which is impossible to obtain using techniques in \citep{NIPS2013_4949}.
%LP \eqref{lp:matching}. 
To this end, we follow the approach 
\citep{shin2013graphical} and
%Recently, 
%There is previous work on finding matching by using BP and .
%We refer the following graphical transformation from \citep{NIPS2013_4949}.
construct the following graph $G^\prime=(V^\prime,E^\prime)$ and weight $w^\prime=[w^\prime_e:
e\in E^{\prime}]\in\mathbb R^{|E^\prime|}$ given set $\mathcal C$ of disjoint odd cycles: % in $G$: 
\begin{align*}
&V^\prime=V\cup\{v_C:C\in\mathcal{C}\}\\
&E^\prime=\{(u,v_C):u\in C,C\in\mathcal{C}\}\cup E\setminus\{e\in C:C\in\mathcal{C}\}
\end{align*}
\begin{align*}
&w^\prime_e= 
\begin{cases}
\frac{1}{2}\sum_{e^\prime\in E(C)}(-1)^{d_C(u,e^\prime)}w_{e^\prime}&\mbox{if}~e={(u,v_C)}\\
&\mbox{for some}~C\in\mathcal{C}\\
w_e&\mbox{otherwise}
\end{cases},
\end{align*}
where
$d_C(u,e^\prime)$ is the graph distance between $u,e^\prime$ in cycle $C$. 
Then, LP \eqref{lp:matchingcycle} is equivalent to the following LP:
%We refer the maximum weight matching with odd cycles LP from \citep{NIPS2013_4949}
%and slightly modify to make the maximum weight perfect matching with odd cycles LP.
%\begin{equation}\label{lp:oddmatching}
%\begin{split}
%	&\mbox{maximize}\qquad~ w\cdot x\\
%	&\mbox{subject to}\qquad \sum_{e\in\delta(v)} x_e\le 1,\qquad~~\forall\, v\in V\\
%	&\qquad\qquad\qquad~ \sum_{e\in C}x_e\le\frac{|C|-1}{2}\quad\forall C\in\mathcal{C}\\
%	&\qquad\qquad\qquad~ x=[x_e]\in [0,1]^{|E|}
%\end{split}
%\end{equation}
\begin{equation}\label{lp:oddmatching}
\begin{split}
	&\mbox{maximize}\qquad~ w^\prime\cdot y\\
	&\mbox{subject to}\quad~ \sum_{e\in\delta(v)} y_e= 1,\qquad\qquad\qquad\qquad\forall\, v\in V\\
	&\qquad\qquad\quad \sum_{u\in V(C)}(-1)^{d_C(u,e)}y_{(v_C,u)}\in [0,2],~\forall e\in E(C)\\
	&\qquad\qquad\quad \sum_{e\in\delta(v_C)}y_e\le |C|-1, \qquad\qquad\quad~~\forall C\in\mathcal{C}\\
	&\qquad\qquad\qquad y=[y_e]\in [0,1]^{|E^\prime|}.
\end{split}
\end{equation}
%The above LP is equivalent to the LP \eqref{lp:matching} with odd cycle constraints.
Now, we construct the following GM from the above LP:
\begin{equation}\label{gm:oddmatching}
	\Pr[Y=y]~\propto~\prod_{e\in E} e^{w_e y_e}\prod_{v\in V} \psi_{v} (y_{\delta(v)})\prod_{C\in\mathcal{C}} \psi_{C} (y_{\delta(v_C)}),
\end{equation}
where the factor function $\psi_v$, $\psi_C$ is defined as
\begin{align*}
&\psi_{v}(y_{\delta(v)}) = 
\begin{cases}
1&\mbox{if}~ \sum_{e\in\delta(v)} y_e= 1\\
0&\mbox{otherwise}
\end{cases},\\
%\end{align*}
%\begin{align*}
&\psi_{C}(y_{\delta(v_C)})= 
\begin{cases}
1&\mbox{if}~ \sum_{u\in V(C)}(-1)^{d_C(u,e)}y_{(v_C,u)}\in\{0,2\}\\
&\quad\sum_{e\in\delta(v_C)}y_e\le |C|-1\\
0&\mbox{otherwise}
\end{cases}.
\end{align*}
For the above GM \eqref{gm:oddmatching}, we derive the following corollary of Theorem \ref{thm:main}.
%whose formal proof is presented in Section \ref{apdx:oddmatching}.
\begin{corollary}\label{cor:oddmatching}
If the solution of the maximum weight perfect matching with odd cycles LP \eqref{lp:oddmatching} is unique and integral, then 
the max-product BP on GM \eqref{gm:oddmatching} converges in $O(w_{\max}|E|^{2.5}/\rho)$ iterations. %the solution of LP \eqref{lp:oddmatching}
\end{corollary}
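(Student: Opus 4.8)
The plan is to derive Corollary \ref{cor:oddmatching} by verifying the three conditions C1, C2, C3 of Theorem \ref{thm:main} for the graphical model \eqref{gm:oddmatching}. Condition C1 is immediate: it is exactly the hypothesis that the solution $y^*$ of LP \eqref{lp:oddmatching} is unique and integral (the equivalence between LP \eqref{lp:matchingcycle} and LP \eqref{lp:oddmatching} is supplied by the gadget construction preceding the statement). Thus the whole argument reduces to checking the combinatorial conditions C2 and C3 for the two kinds of factors appearing in \eqref{gm:oddmatching}, namely the vertex factors $\psi_v$ ($v\in V$) and the blossom factors $\psi_C$ ($C\in\mathcal C$).

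First I would check C2 by classifying the edges of $G'=(V',E')$. An edge of $E'$ is either an original edge $e\in E$ not lying on any cycle, in which case it joins two vertices of $V$ and hence belongs only to the two factors $\psi_u,\psi_v$; or it is a new edge $(u,v_C)$ with $u\in C$, in which case it belongs only to the vertex factor $\psi_u$ and the blossom factor $\psi_C$. In both cases the variable lies in exactly two factors, so $|F_e|=2$ for every $e\in E'$ and C2 holds. I record the useful consequence that, since every variable has degree exactly two, the cardinality clause $|\{j\in\{i\}\union\gamma:|F_j|=2\}|\le 2$ in C3 collapses to $|\{i\}\union\gamma|\le 2$; that is, in repairing a disagreement I am allowed to flip the offending coordinate together with at most one more coordinate.

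Next I would verify C3 for the vertex factors. A binary $y_{\delta(v)}$ with $\psi_v(y_{\delta(v)})=1$ has exactly one incident edge set to $1$, and the same is true of the optimum $y^*$. Hence if $y$ and $y^*$ disagree at some incident edge $i$, they must in fact disagree in exactly two incident edges: the edge that is on in $y^*$ and the edge that is on in $y$. Taking $\gamma$ to be the unique other disagreeing edge (so $\{i\}\union\gamma$ has size two), one checks directly that aligning $\{i\}\union\gamma$ to $y^*$ and aligning its complement to $y^*$ each leave exactly one incident edge on, whence $\psi_v(y'_{\delta(v)})=\psi_v(y''_{\delta(v)})=1$. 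This is the familiar two-edge exchange at a matched vertex and settles C3 for $\psi_v$.

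The remaining and hardest step is C3 for the blossom factors $\psi_C$. Writing $y_u:=y_{(v_C,u)}$ for $u\in V(C)$, a feasible configuration must satisfy the alternating constraints $\sum_{u\in V(C)}(-1)^{d_C(u,e)}y_u\in\{0,2\}$ for every $e\in E(C)$ together with $\sum_{e\in\delta(v_C)}y_e\le|C|-1$. My plan is first to characterize these feasible binary configurations using the fact that the signs $(-1)^{d_C(u,e)}$ alternate around the odd cycle --- for instance, in the triangle case the feasible patterns are precisely those with an even number ($0$ or $2$) of the $y_u$ set to $1$ --- and then to show that any single-coordinate disagreement with the integral optimum $y^*$ can be corrected by toggling at most one further coordinate while preserving both the alternating constraints and the cardinality bound. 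I expect this characterization-plus-exchange argument to be the main obstacle: unlike the vertex factor, the blossom constraints couple all $|C|$ incident variables simultaneously, so the existence of a single ``partner'' coordinate $\gamma$ that repairs the disagreement (rather than requiring $\Theta(|C|)$ flips) is exactly what must be extracted from the odd-cycle alternating structure, and here the oddness of $C$ together with the constraint $\sum_{e\in\delta(v_C)}y_e\le|C|-1$ are the features I would lean on to guarantee such a partner always exists. Once C3 is established for both factor types, Theorem \ref{thm:main} applies verbatim and yields the claimed convergence of max-product BP on \eqref{gm:oddmatching} to $y^*$.
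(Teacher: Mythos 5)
Your overall strategy---reduce to Theorem \ref{thm:main}, get C1 from the hypothesis, check C2 by classifying the edges of $G'$, and verify C3 for the vertex factors $\psi_v$ via the two-edge exchange---coincides exactly with the paper's proof. However, for the blossom factors $\psi_C$, which you yourself call ``the main obstacle'' and ``the hardest step,'' you have only stated an intention, not an argument: you never exhibit the partner coordinate $\gamma$, nor prove that a single partner always suffices. Since this is precisely the non-trivial content of the corollary (everything else is a routine repetition of the matching case), the proposal has a genuine gap.

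Moreover, the route you sketch for closing it is shaky. The characterization you propose to extrapolate from the triangle---``feasible patterns are precisely those with an even number of the $y_u$ set to $1$''---is false for $|C|\ge 5$: on a $5$-cycle $u_1u_2u_3u_4u_5$, the pattern with $y_{(u_1,v_C)}=y_{(u_3,v_C)}=1$ and the rest $0$ has an even number of ones, yet at the edge $e=(u_4,u_5)$ the alternating sum is $-y_1+y_2-y_3+y_4+y_5=-2\notin\{0,2\}$, so it is infeasible. Parity of the support is not the right invariant; what is true (and what the paper imports from \citep{shin2013graphical}) is that a binary configuration satisfies $\psi_C=1$ exactly when its support decomposes into vertex-disjoint paths of $C$ each containing an even number of vertices. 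With this ``disjoint even paths'' structure in hand, the paper's repair rule is concrete: given a disagreement at $u_1$, follow the even path (in $y$) containing $u_1$; if that path carries another disagreement of the same value, take it as $u_2$; otherwise take as $u_2$ a vertex at the end of that path where $y$ and $y^*$ disagree with the opposite value. Both resulting configurations $y'$ and $y''$ are again disjoint unions of even paths, hence feasible, which is what C3 demands. Your proposal stops just short of this characterization and exchange rule, and without them the corollary is not proved.
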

\begin{proof}
The proof of Corollary \ref{cor:oddmatching} can be done by using Theorem \ref{thm:main}.
From GM \eqref{gm:oddmatching}, each variable is connected to two factors ({\em C2} of Theorem \ref{thm:main}). Now, lets check {\em C3} of Theorem \ref{thm:main}.
For $v\in V$, we can apply same argument as the maximum weight matching case.
Suppose there are $v_C$ and $y_{\delta(v_C)}$ with $\psi_C(y_{\delta(v_C)})=1$.
Consider the case when there is ${(u_1,v_C)}\in\delta(v_C)$ with $y_{(u_1,v_C)}=1\ne y^*_{(u_1,v_C)}$. 
As a feasible solution $y_{\delta(v_C)}$ forms a disjoint even paths \citep{shin2013graphical},
check edges along the path contains $u_1$. If there is $u_2\in V(C)$ in the path with $y_{(u_2,v_C)}=1\ne y^*_{(u_2,v_C)}$ exists, choose such ${(u_1,v_C)}$.
If not, choose $(u_2,v_C)\in V(C)$ with $y_{(u_2,v_C)}=0\ne y^*_{(u_2,v_C)}$ at the end of the path.
On the other hand, consider the case when there is ${(u_1,v_C)}\in\delta(v_C)$ with $y_{(u_1,v_C)}=0\ne y^*_{(u_1,v_C)}$. 
As a feasible solution $y_{\delta(v_C)}$ form a disjoint even paths,
check edges along the path contains $u_1$. If there is $u_2\in V(C)$ in the path with $y_{(u_2,v_C)}=0\ne y^*_{(u_2,v_C)}$ exists, choose such ${(u_1,v_C)}$.
If not, choose $(u_2,v_C)\in V(C)$ with $y_{(u_2,v_C)}=1\ne y^*_{(u_2,v_C)}$ at the end of the path.
Then, from disjoint even paths point of view, we can check that
\begin{align*}
&\psi_C(y^\prime_{\delta(v_C)})=1,\\
&\qquad\mbox{
where
$y^\prime_{(u,v_C)} = \begin{cases}
		y_{(u,v_C)}~&\mbox{if}~u\ne u_1,u_2\\
		y^*_{(u,v_C)}~&\mbox{otherwise}
\end{cases}$.}\\
&\psi_C(y^{\prime\prime}_{\delta(v_C)})=1,\\
&\qquad\mbox{
where
$y^{\prime\prime}_{(u,v_C)} = \begin{cases}
		y_{(u,v_C)}~&\mbox{if}~u=u_1,u_2\\
		y^*_{(u,v_C)}~&\mbox{otherwise}
\end{cases}$.}
\end{align*}
From Theorem \ref{thm:main}, we can conclude that
if the solution of LP \eqref{lp:oddmatching} is unique and integral, the max-product BP on GM \eqref{gm:oddmatching} converges to the solution of LP \eqref{lp:oddmatching} in $O(w_{\max}|E|^{2.5}/\rho)$ iterations.
This completes the proof of Corollary \ref{cor:oddmatching}.
\end{proof}
We again emphasize that
a similar result for the maximum weight (non-perfect) matching problem was established in \citep{shin2013graphical}.
However, the proof technique in the paper does not extend to the perfect matching problem. This is in essence because
presumably the perfect matching problem is harder than the non-perfect matching one.
Under the proposed generic criteria of Theorem \ref{thm:main}, we overcome the technical difficulty.
%The proof of Corollary \ref{cor:oddmatching} is 
%We note that one can obtain a similar conclusion for the maximum (non-perfect) weight matching with odd cycles problem. We omit the details in this paper.

%\vspace{-0.05in}
\subsection{Example IV: Vertex Cover}\label{subsec:vc}
Given an undirected graph $G=(V,E)$ and non-negative integer vertex weights $b=[b_v:v\in V]\in \mathbb{Z}_+^{|V|}$, the vertex cover problem is to find a set of vertices
minimizes the sum of vertex weights in the set such that each edge is connected to at least one vertex in it.
This problem is one of {Karp's 21 NP-complete problems} \citep{karp1972reducibility}.
The associated LP formulation to the vertex cover problem is as follows:
\begin{equation}\label{lp:vertexcover0}
\begin{split}
	&\mbox{minimize}\qquad~ b\cdot y\\
	&\mbox{subject to}\qquad y_u+y_v\ge 1\\%, (u,v)\in E\\
	&\qquad\qquad\qquad~ y=[y_v]\in [0,1]^{|V|}.
\end{split}
\end{equation}
However, if we design a GM from the above LP,
it does not satisfy conditions in Theorem \ref{thm:main}.
Instead, we will show that BP can solve the following dual LP:
%Rather than solving LP \eqref{lp:vertexcover0} directly, we will solve LP \eqref{lp:vertexcover0} by solving its dual.
%The dual LP of the vertex cover problem is formulated as follows:
\begin{equation}\label{lp:vertexcover}
\begin{split}
	&\mbox{maximize}\qquad~ \sum_{e\in E}x_e\\
	&\mbox{subject to}\qquad \sum_{e\in\delta(v)} x_e\le b_v\\%,\quad\forall\, v\in V\\
	&\qquad\qquad\qquad~ x=[x_e]\in \mathbb R_+^{|E|}.
\end{split}
\end{equation}
Note that the above LP always has a half-integral solution.
%To address the issue, 
As we did in Section \ref{subsec:matching},
one can duplicate edges, i.e., $E^\prime=\{e_1,\dots,e_{2b_{\max}}:e\in E\}$ with
$b_{\max}=\max_v b_v$, and design the following equivalent LP having an integral solution:
%One can notice that $x_e\le\max_v b_v$. To make an equivalent LP with binary variables, duplicate each edge $e$ to $e_1,\dots,e_n$ where $n=\max_v b_v$.
%Let us introduce an auxiliary graph $G^\prime=(V,E^\prime)$ where $E^\prime=\{e_1,\dots,e_n|e\in E\}$. 
%Then, the following LP is equivalent with LP \eqref{lp:vertexcover} by setting $x_e=\sum_i x_{e_i}$.
\begin{equation}\label{lp:vertexcover2}
\begin{split}
	&\mbox{maximize}\qquad~ w^\prime\cdot x\\
	&\mbox{subject to}\qquad \sum_{e_i\in\delta(v)} x_{e_i}\le 2b_v,\quad\forall\, v\in V\\
	&\qquad\qquad\qquad~ x=[x_{e_i}]\in [0,1]^{|E^\prime|}
\end{split},
\end{equation}
%where $\delta(v)\subset E^{\prime}$ is 
%the set of copied edges connected to vertex $v$
where $w^\prime_{e_i}=w_e$ for $e\in E$ and its copy $e_i\in E^{\prime}$.
%We choose small enough $\varepsilon$ to avoid unnecessary multiple solutions due to the symmetry of $e_i$s.
From the above LP, we can construct the following GM:
\begin{equation}\label{gm:vertexcover}
	\Pr[X=x]~\propto~\prod_{e_i\in E^\prime} e^{w_{e_i}^\prime x_{e_i}}\prod_{v\in V} \psi_{v} (x_{\delta(v)}),
\end{equation}
where the factor function $\psi_v$ is defined as
\begin{align*}
&\psi_{v}(x_{\delta(v)}) = 
\begin{cases}
1&\mbox{if}~ \sum_{e_i\in\delta(v)} x_{e_i}\le2 b_v\\
0&\mbox{otherwise}
\end{cases}.
\end{align*}
For the above GM \eqref{gm:vertexcover}, one can easily check that Condition {\em C2} 
of Theorem \ref{thm:main} and the condition of Lemma \ref{lem:c3} hold. This directly leads to
the following corollary.
\begin{corollary}\label{cor:vertexcover}
If the solution of the vertex cover dual LP \eqref{lp:vertexcover2} is unique, then the
max-product BP on GM \eqref{gm:vertexcover} converges in $O(w_{\max}|E|^{2.5}/\rho)$ iterations. %to the solution of LP \eqref{lp:vertexcover}
\end{corollary}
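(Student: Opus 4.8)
The plan is to derive Corollary \ref{cor:vertexcover} as a direct application of Theorem \ref{thm:main} to the graphical model \eqref{gm:vertexcover}: I would verify that conditions C1, C2 and C3 all hold, after which the conclusion is immediate. The uniqueness hypothesis of the corollary supplies half of C1, so the three things to establish are (i) that the unique optimum of \eqref{lp:vertexcover2} is in fact integral, (ii) the degree bound C2, and (iii) the local exchange property C3. I expect (ii) to be immediate and (iii) to be the crux.

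For C2, observe that the only non-variable factors are the vertex factors $\psi_v$, and each edge copy $e_i$ (a copy of $e=(u,v)$) lies in the scope of exactly the two factors $\psi_u$ and $\psi_v$. Hence $|F_{e_i}|=2$ for every variable, so C2 holds with equality. An important consequence is that in C3 the cardinality constraint $|\{j\in\{i\}\cup\gamma:|F_j|=2\}|\le 2$ collapses to $|\{e_i\}\cup\gamma|\le 2$, i.e.\ the set $\gamma$ may contain \emph{at most one} edge copy. This is what forces the exchange argument below to pair each flipped coordinate with a single partner. For the integrality in C1, I would invoke the classical fact that the packing polytope $\{x\ge 0:\sum_{e\in\delta(v)}x_e\le b_v,\ \forall v\}$ of LP \eqref{lp:vertexcover} has only half-integral vertices, so \eqref{lp:vertexcover} attains its optimum at some half-integral point. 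Under the edge-duplication correspondence $x_e=\tfrac12\sum_i x_{e_i}$, a half-integral optimum of \eqref{lp:vertexcover} lifts to an integral feasible point of \eqref{lp:vertexcover2} of equal value (set $2x_e$ of the $2b_{\max}$ copies of $e$ to $1$), and conversely every integral point of \eqref{lp:vertexcover2} projects to a half-integral feasible point of \eqref{lp:vertexcover}; thus the two LPs are equivalent and \eqref{lp:vertexcover2} has an integral optimum. Combined with the assumed uniqueness, this gives the unique integral solution $x^*$ required by C1.

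The main work is C3. Fix a vertex factor $\psi_v$, a feasible local assignment $x_{\delta(v)}$ (so $\sum_{e_i\in\delta(v)}x_{e_i}\le 2b_v$), and a copy $e_i\in\delta(v)$ with $x_{e_i}\ne x^*_{e_i}$. Write $s=\sum_{e_k\in\delta(v)}x_{e_k}$ and $s^*=\sum_{e_k\in\delta(v)}x^*_{e_k}$, both at most $2b_v$. Since $x_{e_i},x^*_{e_i}\in\{0,1\}$ differ, the flip of this single coordinate changes the relevant sum by $\pm 1$. When the constraint has slack I would take $\gamma=\emptyset$, and a one-line check shows both $x'$ and $x''$ stay feasible. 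The delicate case is when the constraint is tight; then flipping $e_i$ in the direction that raises the sum would violate $\le 2b_v$, so I would choose $\gamma=\{e_j\}$ consisting of one \emph{partner} copy whose $x$ and $x^*$ values differ with the opposite sign, so that the two flips cancel and both $x'$ and $x''$ keep sum exactly $2b_v$ (hence feasible).

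Existence of such a partner is the heart of the matter and follows from a counting argument: writing $s-s^*=\sum_{e_k\in\delta(v)}(x_{e_k}-x^*_{e_k})$, the contribution of $e_i$ is $\pm1$, while tightness forces $s-s^*$ to have the opposite weak sign (tight at $x^*$ gives $s-s^*\le 0$ in the case $x_{e_i}-x^*_{e_i}=+1$, and tight at $x$ gives $s-s^*\ge 0$ in the case $x_{e_i}-x^*_{e_i}=-1$); hence the remaining coordinates must contain some $e_j$ with $x_{e_j}-x^*_{e_j}$ of the required opposite sign. This is exactly the step where the $|\gamma|\le 1$ restriction coming from C2 must be respected, and confirming that a \emph{single} partner always suffices, rather than needing to rebalance across several coordinates, is where I expect the argument to require the most care. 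With C1, C2 and C3 in hand, Theorem \ref{thm:main} yields the convergence of the max-product BP on \eqref{gm:vertexcover} to $x^*$.
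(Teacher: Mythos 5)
Your proposal is correct and follows essentially the same route as the paper: verify \emph{C2} from the fact that each edge copy lies in exactly its two endpoint factors, establish \emph{C3} by pairing the flipped copy $e_i$ with at most one opposite-sign partner copy $e^\prime_j$ (found by exactly the counting argument you give), and then invoke Theorem \ref{thm:main}. The only differences are cosmetic: the paper branches on whether an opposite-sign partner exists (choosing $e^\prime_j=e_i$ when it does not) rather than on tightness of the constraint, and it leaves the half-integrality/lifting justification of the integral optimum in \emph{C1} to the main text, which you spell out explicitly.
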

Again, the uniqueness condition in the above corollary is easy to guarantee by adding small random noises to edge weights $[w^\prime_{e_i}]$.
We further remark that if 
the solution of the primal LP \eqref{lp:vertexcover0} is integral, then
it can be easily found from the solution of the dual LP \eqref{lp:vertexcover2}
using the strictly complementary slackness condition \citep{bertsimas1997introduction} .

\subsection{Example V: Edge Cover}\label{subsec:ec}
Given an undirected graph $G=(V,E)$ and non-negative edge
weights $w=[w_e:e\in E]\in \mathbb{R}_+^{|E|}$ on edges, the minimum weight edge cover problem is to find a set of edges such that
each vertex is connected to at least one edge in the set and the sum of edge weights in the set is minimized. One can naturally design
the following LP for this problem:
\begin{equation}\label{lp:ec}
\begin{split}
	&\mbox{minimize}\qquad~ w\cdot x\\
	&\mbox{subject to}\qquad \sum_{e\in\delta(v)} x_e\ge 1\\%,\quad\forall\, v\in V\\
	&\qquad\qquad\qquad~ x=[x_e]\in [0,1]^{|E|}.
\end{split}
\end{equation}
where $\delta(v)$ is the set of edges connected to a vertex $v$. 
If the above LP has an integral solution, it corresponds to the solution of the minimum weight edge cover problem.

Similarly as the case of matching,
it is known that %for each vertex $x$ of feasible solution polytope of 
the minimum weight edge cover LP \eqref{lp:ec}
always has a half-integral solution $x^*\in \{0,\frac12,1\}^{|E|}$. We will design BP for obtaining the half-integral solution.
To this end, duplicate each edge $e$ to $e_1,e_2$ and define a new graph $G^\prime=(V,E^\prime)$ where $E^\prime=\{e_1,e_2:e\in E\}$.
Then, we suggest the following equivalent LP that always have an integral solution:
%make an LP on $G^\prime$ which always have non-fractional solution.
\begin{equation}\label{lp:modec}
\begin{split}
	&\mbox{minimize}\qquad~ w^\prime\cdot x\\
	&\mbox{subject to}\qquad \sum_{e_i\in\delta(v)} x_{e_i}\ge 2\\%\qquad\forall\,v\in V\\
	&\qquad\qquad\qquad~ x=[x_{e_i}]\in [0,1]^{|E^\prime|}.
\end{split}
\end{equation}
where $w^\prime_{e_1}=w^\prime_{e_2}=w_e$. %-\varepsilon$. 
One can easily observe that solving LP \eqref{lp:modec} is equivalent to solving LP \eqref{lp:ec}
due to our construction of $G^\prime$ and $w^\prime$.
%We introduce a small $\varepsilon$ to prevent the unnecessary multiple solutions due to the symmetry of $e_1,e_2$.
%Then, by setting $x_e=(x_{e_1}+x_{e_2})/2$, the solution of LP \eqref{lp:modmatching} is a solution of LP \eqref{lp:matching} and 
%LP \eqref{lp:modmatching} has unique integral solution if and only if LP \eqref{lp:matching} has unique solution.
Now, construct the following GM for LP \eqref{lp:modec}: % as below:
\begin{equation}\label{gm:modec}
	\Pr[X=x]~\propto~\prod_{e_i\in E^\prime} e^{-w^\prime_{e_i} x_{e_i}}\prod_{v\in V} \psi_{v} (x_{\delta(v)}),
\end{equation}
where the factor function $\psi_v$ is defined as
\begin{align*}
&\psi_{v}(x_{\delta(v)}) = 
\begin{cases}
1&\mbox{if}~ \sum_{e_i\in\delta(v)} x_{e_i}\ge 2\\
0&\mbox{otherwise}
\end{cases}.
\end{align*}
For the above GM \eqref{gm:modec},
one can easily check that Condition {\em C2} 
of Theorem \ref{thm:main} and the condition of Lemma \ref{lem:c3} hold. This directly leads to
the following corollary.
\begin{corollary}\label{cor:ec}
If the solution of the minimum weight edge cover LP \eqref{lp:modmatching} is unique, then 
the max-product BP on GM \eqref{gm:modec} converges in $O(w_{\max}|E|^{2.5}/\rho)$ iterations. %to the solution of LP \eqref{lp:matching}
\end{corollary}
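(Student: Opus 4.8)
The plan is to derive Corollary \ref{cor:ec} as an instance of Theorem \ref{thm:main}, by verifying that the GM \eqref{gm:modec} together with the edge-cover LP \eqref{lp:modec} satisfies conditions \emph{C1, C2, C3}. First observe that \eqref{gm:modec} is exactly of the form \eqref{eq:gm1} for a minimization problem: the variable factors $e^{-w'_{e_i}x_{e_i}}$ match the objective $w'\cdot x$, and each $\psi_v$ encodes the covering constraint $\sum_{e_i\in\delta(v)}x_{e_i}\ge 2$. For \emph{C1} I would invoke the known half-integrality of the edge-cover LP \eqref{lp:ec}: after duplicating every edge, LP \eqref{lp:modec} always admits an integral optimum, so if its optimum is unique (the corollary's hypothesis) it must coincide with that integral optimum, yielding the tight integral $x^*$ required by \emph{C1}. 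Condition \emph{C2} is immediate, since each duplicated variable $x_{e_i}$ with $e=(u,v)$ is shared by exactly the two factors $\psi_u,\psi_v$, so $|F_{e_i}|=2$.

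The substance of the argument is \emph{C3}. Because every variable has degree exactly two, the requirement $|\{j\in\{i\}\cup\gamma:|F_j|=2\}|\le 2$ collapses to $|\{i\}\cup\gamma|\le 2$; that is, I may correct the disagreement at $i$ only by flipping $i$ together with at most one other edge of $\delta(v)$. Fix a locally feasible $x_{\delta(v)}$ (so $S:=\sum_{e_i\in\delta(v)}x_{e_i}\ge 2$) and an edge $i\in\delta(v)$ with $x_i\ne x^*_i$. Writing $S^*:=\sum_{e_i\in\delta(v)}x^*_{e_i}\ge 2$ and $\Delta:=\sum_{k\in\{i\}\cup\gamma}(x^*_k-x_k)$, the two assignments of \emph{C3} have factor-sums $S+\Delta$ (for $x'$) and $S^*-\Delta$ (for $x''$), so the task is to choose $\gamma$ with $2-S\le\Delta\le S^*-2$ subject to $|\{i\}\cup\gamma|\le 2$.

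I would then split into two symmetric cases by the type of error at $i$. If $x_i=0,x^*_i=1$ (a ``deficient'' edge, contributing $+1$ to $\Delta$): when $S^*\ge 3$ take $\gamma=\emptyset$, giving $\Delta=1\in[2-S,S^*-2]$; when $S^*=2$ pair $i$ with an ``excess'' edge $j$ (where $x_j=1,x^*_j=0$), giving $\Delta=0$ and both sums equal to $S,S^*\ge 2$. The case $x_i=1,x^*_i=0$ is handled identically, using slack $S\ge 3$ or pairing with a deficient edge. The crux, which I expect to be the main obstacle, is guaranteeing that the compensating edge exists in the tight case: when $S^*=2$ and $i$ is deficient, $x^*$ has at most one agreeing ``on'' edge, so feasibility $S\ge 2$ of the current configuration forces at least one excess edge to exist, and the symmetric count produces a deficient edge when $S=2$ and $i$ is excess. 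Hence a valid $\gamma$ always exists, establishing \emph{C3} and, through Theorem \ref{thm:main}, the corollary. The argument parallels that of Corollary \ref{cor:matching}, the only new ingredient being that the inequality $\ge 2$ (rather than the equality $=2$) is absorbed by the optional empty correction $\gamma=\emptyset$ whenever there is slack.
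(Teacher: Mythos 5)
Your proposal is correct and takes essentially the same approach as the paper: verify \emph{C2} from the fact that each duplicated edge variable belongs to exactly its two endpoint factors, and verify \emph{C3} by choosing $\gamma$ to be either empty (flip the disagreeing edge alone) or a single oppositely-disagreeing edge of $\delta(v)$, with \emph{C1} supplied by half-integrality of \eqref{lp:ec} plus edge duplication. The only cosmetic difference is the ordering of cases: the paper pairs with a complementary-error edge whenever one exists and flips $i$ alone otherwise, whereas you flip alone whenever there is slack ($S\ge 3$ or $S^*\ge 3$) and pair only in the tight case, proving the partner edge exists there -- both choices pass the same feasibility checks.
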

Again, the uniqueness condition in the above corollary is easy to guarantee by adding small random noises to edge weights $[w^\prime_{e_i}]$.

%\vspace{-0.05in}
\subsection{Example VI: Traveling Salesman}\label{sec:tsp}
Given a directed graph $G=(V,E)$ and non-negative edge weights $w=[w_e:e\in E]\in \mathbb{R}_+^{|E|}$, 
the traveling salesman problem (TSP) is to find the minimum weight Hamiltonian cycle in $G$. % covers every vertices in $V$.
The natural LP formulation to TSP is the following:
%We may find a solution of TSP from the following LP.
\begin{equation}\label{lp:tsp}
\begin{split}
	&\mbox{minimize}\qquad w\cdot x\\
	&\mbox{subject to}\qquad \sum_{e\in\delta(v)} x_{e}=2\\%~\forall v\in V\\
	&\qquad\qquad\quad x=[x_{e}]\in [0,1]^{|E|}.
\end{split}
\end{equation}
%Where, $\delta(v)$ is a set of edges connected to a vertex $v$.
From the above LP, one can construct the following GM:
\begin{equation}\label{gm:tsp}
	\Pr[X=x]~\propto~\prod_{e\in E} e^{-w_e x_e}\prod_{v\in V} \psi_{v} (x_{\delta(v)}),
\end{equation}
where the factor function $\psi_v$ is defined as
\begin{align*}
&\psi_{v}(x_{\delta(v)}) = 
\begin{cases}
1&\mbox{if}~ \sum_{e\in\delta(v)} x_{e}=2\\
0&\mbox{otherwise}
\end{cases}.
\end{align*}
It is known that LP \eqref{lp:tsp} always has an integral solution \cite{bolker2006simple}.
For the above GM \eqref{gm:tsp}, one can easily check that Condition {\em C2} 
of Theorem \ref{thm:main} and the condition of Lemma \ref{lem:c3} hold. This directly leads to
the following corollary.
\begin{corollary}\label{cor:tsp}
%\textcolor{red}{
If the solution of the traveling salesman LP \eqref{lp:tsp} is unique, 
then the max-product BP on GM \eqref{gm:tsp} converges in $O(w_{\max}|E|^{2.5}/\rho)$ iterations. %to the solution of LP \eqref{lp:tsp}
\end{corollary}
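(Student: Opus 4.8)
The plan is to verify that GM \eqref{gm:tsp} satisfies the three conditions of Theorem \ref{thm:main} and then invoke the theorem directly. Condition C1 is exactly the hypothesis of the corollary (the TSP LP \eqref{lp:tsp} has a unique integral solution $x^*$), so nothing is needed there beyond recording it. Condition C2 is immediate from the structure of the factor graph: each edge variable $x_e$ with $e=(u,v)$ appears only in the two vertex factors $\psi_u$ and $\psi_v$, so $|F_e|=2$ for every $e$.

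The substance of the proof is checking C3 for a vertex factor $\psi_v$, whose constraint is $\sum_{e\in\delta(v)} x_e = 2$. Fix any $x_{\delta(v)}\in\{0,1\}^{|\delta(v)|}$ with $\psi_v(x_{\delta(v)})=1$ and any $e\in\delta(v)$ with $x_e\neq x^*_e$. Since both $x$ and $x^*$ satisfy the degree constraint $\sum_{e\in\delta(v)} x_e = \sum_{e\in\delta(v)} x^*_e = 2$, the set of edges at $v$ where $x$ disagrees with $x^*$ by flipping $1\!\to\!0$ and the set where it flips $0\!\to\!1$ must have equal cardinality. Hence there is an edge $e'\in\delta(v)$ whose disagreement is opposite to that of $e$: if $x_e=1,x^*_e=0$ then one may pick $e'$ with $x_{e'}=0,x^*_{e'}=1$, and vice versa. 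I would then set $\gamma=\{e'\}$.

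With this choice, $\{e\}\cup\gamma = \{e,e'\}$ has exactly two elements, and since every edge has $|F_f|=2$ the counting condition $|\{f\in\{e\}\cup\gamma : |F_f|=2\}|\le 2$ holds trivially. It remains to check the two feasibility conditions. Flipping the pair $\{e,e'\}$ to their $x^*$ values changes the degree sum by $(x^*_e - x_e)+(x^*_{e'}-x_{e'})$, which is $(-1)+(+1)=0$ by the opposite-disagreement choice, so $\psi_v(x'_{\delta(v)})=1$. Symmetrically, keeping only $\{e,e'\}$ at their $x$ values while setting all other coordinates of $\delta(v)$ to $x^*$ changes $\sum x^*$ by $(x_e-x^*_e)+(x_{e'}-x^*_{e'})=0$, so $\psi_v(x''_{\delta(v)})=1$ as well. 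This verifies C3, and Theorem \ref{thm:main} yields the claimed convergence.

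The main obstacle to anticipate is not any single computation but rather the structural observation driving C3: because the TSP degree factor enforces an equality constraint with the same right-hand side ($=2$) on both $x$ and $x^*$, the disagreements at each vertex are automatically balanced into matched $1\!\to\!0$ and $0\!\to\!1$ pairs, which is exactly what permits a two-element augmentation set $\{e,e'\}$ to preserve feasibility on both the $x'$ and $x''$ sides. Once this balance is recognized the verification is routine; the care needed is simply in handling both sign cases of the disagreement symmetrically and confirming that the counting bound in C3 is met, which here is automatic since every edge variable has degree two in the factor graph.
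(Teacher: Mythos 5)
Your proposal is correct and follows essentially the same route as the paper's proof: verify \emph{C2} from the fact that each edge variable touches exactly two vertex factors, and verify \emph{C3} by pairing the disagreeing edge $e$ with an oppositely-disagreeing edge $e'$ at the same vertex and taking $\gamma=\{e'\}$. Your explicit balancing argument (equal numbers of $1\!\to\!0$ and $0\!\to\!1$ disagreements because both $x$ and $x^*$ satisfy the degree-$2$ equality) is just a cleaner justification of the existence of $e'$, which the paper asserts tersely as following ``by formulation of GM.''
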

Again, the uniqueness condition in the above corollary is easy to guarantee by adding small random noises to edge weights. % $[w_{e_i}]$.
\subsection{Example VII: Maximum Weight Cycle Packing}
Given an undirected graph $G=(V,E)$ and non-negative edge weights $w=[w_e:e\in E]\in \mathbb{R}_+^{|E|}$, 
the maximum weight vertex disjoint cycle packing problem is to find the maximum weight set of cycles with no common vertex.
It is easy to observe that it is equivalent to find
%the maximum weight vertex disjoint cycle packing problem on $G$ is to find
a subgraph maximizing the sum of edge weights on it such that each vertex of the subgraph has degree 2 or 0.
The natural LP formulation to this problem is following:
\begin{equation}\label{lp:maxvertexpacking}
\begin{split}
	&\mbox{maximize}\qquad w\cdot x\\
	&\mbox{subject to}\qquad \sum_{e\in\delta(v)} x_{e}=2y_v\\
	&\qquad\qquad x=[x_{e}]\in [0,1]^{|E|},y=[y_v]\in[0,1]^{|V|}.
\end{split}
\end{equation}
%Where, $\delta(v)$ is a set of edges connected to a vertex $v$.
From the above LP, one can construct the following GM:
\begin{equation}\label{gm:maxvertexpacking}
	\Pr[X=x,Y=y]~\propto~\prod_{e\in E} e^{w_e x_e}\prod_{v\in V} \psi_{v} (x_{\delta(v)},y_v),
\end{equation}
where the factor function $\psi_v$ is defined as
\begin{align*}
&\psi_{v}(x_{\delta(v)},y_v) = 
\begin{cases}
1&\mbox{if}~ \sum_{e\in\delta(v)} x_{e}=2y_v\\
0&\mbox{otherwise}
\end{cases}.
\end{align*}
For the above GM \eqref{gm:maxvertexpacking}, one can easily check that Condition {\em C2} 
of Theorem \ref{thm:main} and the condition of Lemma \ref{lem:c3} hold. This directly leads to
the following corollary.
\begin{corollary}\label{cor:maxvertexpacking}
If the solution of maximum weight vertex disjoint cycle packing LP \eqref{lp:maxvertexpacking} is unique and integral, 
then the max-product BP on GM \eqref{gm:maxvertexpacking} converges in $O(w_{\max}|E|^{2.5}/\rho)$ iterations. %to the solution of LP \eqref{lp:maxvertexpacking}
\end{corollary}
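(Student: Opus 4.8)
The plan is to obtain Corollary \ref{cor:maxvertexpacking} as a direct instance of Theorem \ref{thm:main}, so the whole task reduces to checking that conditions C1, C2, C3 hold for the GM \eqref{gm:maxvertexpacking}. The variables of this GM are the edge variables $\{x_e : e \in E\}$ together with the vertex indicators $\{y_v : v \in V\}$, and the factors are exactly the vertex factors $\{\psi_v\}$. Since the objective of LP \eqref{lp:maxvertexpacking} is a maximization, I would first match it to the minimization form \eqref{eq:lp1} by taking the weight vector of \eqref{eq:gm1} to be $-w_e$ on each edge and $0$ on each $y_v$, so that \eqref{gm:maxvertexpacking} is literally an instance of \eqref{eq:gm1}. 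Condition C1 is then precisely the hypothesis of the corollary, namely that LP \eqref{lp:maxvertexpacking} has a unique integral optimum $(x^*,y^*)$.

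Condition C2 is immediate from the incidence structure. An edge variable $x_e$ with $e=(u,v)$ appears only in the two factors $\psi_u,\psi_v$, so $|F_{x_e}|=2$, while a vertex indicator $y_v$ appears only in $\psi_v$, so $|F_{y_v}|=1$. Hence $|F_j|\le 2$ for every variable $j$, and moreover the only variables contributing to the ``degree-two'' count in C3 are edge variables.

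The substance of the proof is C3, which I would establish by a local case analysis at a single factor $\psi_v$. Every binary configuration with $\psi_v=1$ has either no incident edge on (and $y_v=0$) or exactly two incident edges on (and $y_v=1$). Fixing such a configuration $x$ and the optimum $x^*$, I would compare them through the sets $P=\{e\in\delta(v): x_e=1,\,x^*_e=0\}$ and $N=\{e\in\delta(v): x_e=0,\,x^*_e=1\}$, which satisfy $|P|-|N|=2(y_v-y^*_v)$. For a differing vertex variable $i=y_v$, I would take $\gamma$ to be the two edges that are on in whichever of $x,x^*$ has its indicator equal to $1$; for a differing edge $i=e$ with $y_v=y^*_v$, I would take $\gamma$ to be a single complementary edge drawn from $N$ (resp.\ $P$), which is nonempty because $|P|=|N|$ in that case. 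In each of these cases exactly two edges lie in $\{i\}\cup\gamma$, and one checks directly that both $x'$ and $x''$ again have degree $0$ or $2$ at $v$, so $\psi_v(x'_\alpha)=\psi_v(x''_\alpha)=1$.

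The main obstacle is the remaining case, where $i=e$ is an edge but $y_v\neq y^*_v$, so that one configuration has degree two at $v$ and the other degree zero. Here flipping $e$ in isolation destroys the degree constraint, so $\gamma$ must also contain the other incident on-edge together with the indicator $y_v$; the key point keeping C3's counting condition intact is that $y_v$ has $|F_{y_v}|=1$ and therefore does not contribute to the bound $|\{j\in\{i\}\cup\gamma:|F_j|=2\}|\le 2$, leaving exactly the two edges to be counted. Throughout, the delicate part is to confirm that the chosen $\gamma$ makes \emph{both} $x'$ (which adopts $x^*$ on $\{i\}\cup\gamma$) and $x''$ (which adopts $x$ there) valid simultaneously; I expect this simultaneous check, rather than the selection of $\gamma$ itself, to be where the verification demands the most care.
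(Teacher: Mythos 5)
Your proposal is correct and follows essentially the same route as the paper: both reduce the corollary to Theorem \ref{thm:main}, note that C2 holds since each edge variable lies in two factors and each $y_v$ in only one, and verify C3 by a local case analysis at each factor $\psi_v$ that swaps a differing edge with a complementary incident edge, putting $y_v$ into $\gamma$ when the degrees $2y_v$ and $2y^*_v$ disagree and using $|F_{y_v}|=1$ to keep the count at two. Your write-up is in fact somewhat more careful than the paper's (explicit $P$/$N$ bookkeeping, explicit handling of the case $i=y_v$, which the paper dismisses as ``similar''), but the underlying argument is the same.
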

%The proof of Corollary \ref{cor:maxvertexpacking} is presented in the appendix.
Again, the uniqueness condition in the above corollary is easy to guarantee by adding small random noises to edge weights. % $[w_{e_i}]$.

\iffalse
\subsection{Maximum Weight Edge Disjoint Cycle Packing}
Given a graph $G=(V,E)$ and weight $w=[w_e:e\in E]\in \mathbb{R}^{|E|}$, 
the maximum weight edge disjoint cycle packing problem is to find a maximum weight set of cycles with no common edge.
Then, the maximum weight edge disjoint cycle packing problem on $G$ is to find
a subgraph maximizes the sum of edge weights such that each vertex has even degree.
\begin{equation}\label{lp:maxedgepacking}
\begin{split}
	&\mbox{maximize}\qquad \sum_{e\in E}w_e x_{e}\\
	&\mbox{subject to}\qquad \sum_{e\in\delta(v)} x_{e}=\sum_{i=1}^{deg(v)}2y_{v_i}\\
	&\qquad\qquad~~ x=[x_{e}]\in \{0,1\}^{|E|},y=[y_{v_i}]\in[0,1]^{{2|V||E|}}.
\end{split}
\end{equation}
Where, $\delta(v)$ is a set of edges connected to a vertex $v$.
From above LP, we can construct GM
\begin{equation}\label{gm:maxedgepacking}
	\Pr[X=x,Y=y]~\propto~\prod_{e\in E} e^{w_e x_e}\prod_{v\in V} \psi_{v} (x_{\delta(v)}),
\end{equation}
where the factor function $\psi_v$ is defined as
\begin{align*}
&\psi_{v}(x_{\delta(v)}) = 
\begin{cases}
1&\mbox{if}~ \sum_{e\in\delta(v)} x_{e}=\mbox{even}\\
0&\mbox{otherwise}
\end{cases}.\\
\end{align*}
Let us introduce Corollary from above GM.
\begin{corollary}\label{cor:maxedgepacking}
If the maximum weight edge disjoint cycle packing LP \eqref{lp:maxedgepacking} has a unique and integral solution, 
then max-product BP on GM \eqref{gm:maxedgepacking} converges to the solution of LP \eqref{lp:maxedgepacking}
\end{corollary}
The proof of Corollary \ref{cor:maxedgepacking} is presented in the appendix.
\fi

\subsection{Example VIII: Minimum Cost Network Flow}
Given a directed graph $G=(V,E)$, supply/demand $d=[d_v]\in\mathbb{Z}^{|E|}$ and 
capacity $c=[c_e:e\in E]\in\mathbb{Z}_+^{|E|}$,
the minimum cost network flow problem can be forumlated by the following LP.

\begin{equation}\label{lp:networkflow}
\begin{split}
	&\mbox{minimize}\qquad~ w\cdot x\\
	&\mbox{subject to}\qquad \sum_{e\in \delta^o(v)} x_{e}-\sum_{e\in\delta^i(v)} x_{e}=d_v\\
	&\qquad\qquad\qquad~ x_e\le c_e\\
	 %,~\forall\,v\in V\\
	&\qquad\qquad\qquad~ x=[x_{e}]\in \mathbb{R}_+^{|E|},
\end{split}
\end{equation}
where $\delta^i(v),\delta^o(v)$ are the set of incoming, outgoing edges of $v$.
It is known that the above LP always has an integral solution.
We will design BP for obtaining the solution of LP \eqref{lp:networkflow}.
To this end, duplicate each edge $e$ to $e_1,\dots,e_{c_e}$ and define a new graph $G^\prime=(V,E^\prime)$ where $E^\prime=\{e_1,\dots,e_{c_e}:e\in E\}$.
Then, we suggest the following equivalent LP that always have an integral solution:
\begin{equation}\label{lp:networkflow2}
\begin{split}
	&\mbox{minimize}\qquad~ w^\prime\cdot x\\
	&\mbox{subject to}\qquad \sum_{e_i\in \delta^o(v)}x_{e_i}-\sum_{e_i\in\delta^i(v)} x_{e_i}=d_v\\
	&\qquad\qquad\qquad~ x=[x_{e_i}]\in[0,1]^{|E^\prime|}.
\end{split}
\end{equation}
where $w^\prime_{e_1}=\dots=w^\prime_{e_{c_e}}=w_e$. One can easily observe that solvin
LP \eqref{lp:networkflow} is equivalent to solving LP \eqref{lp:networkflow2} due to our construction
of $G^\prime$ and $w^\prime$.
Now, construct the following GM for LP \eqref{lp:networkflow2}:
\begin{equation}\label{gm:networkflow}
	\Pr[X=x]~\propto~\prod_{e_i\in E^\prime}e^{-w^\prime_{e_i} x_{e_i}}\prod_{v\in V} 
	\psi_{v} (x_{\delta(v)}),
\end{equation}
where the factor function $\psi_v$ is defined as
\begin{align*}
&\psi_{v}(x_{\delta(v)}) = 
\begin{cases}
1&\mbox{if}~ \sum_{e_i\in\delta^o(v)} x_{e_i}
-\sum_{e_i\in\delta^i(v)} x_{e_i}=d_v\\
0&\mbox{otherwise}
\end{cases}.
\end{align*}
For the above GM \eqref{gm:networkflow}, one can easily check that Condition {\em C2} 
of Theorem \ref{thm:main} and the condition of Lemma \ref{lem:c3} hold. This directly leads to
the following corollary.
\begin{corollary}\label{cor:networkflow}
If the solution of the network flow LP \eqref{lp:networkflow} is unique, then the max-product BP on GM \eqref{gm:networkflow} converges in $O(w_{\max}|E^\prime|^{2.5}/\rho)$ iterations.
\end{corollary}
%\textcolor{red}{
Gamarnik et al. \cite{gamarnik2012belief} also studied the convergence and correct of BP on the minimum cost network flow problem. 
However, they studied BP on GM of continuous variables while our analysis is for BP on GM
of binary variables.
For practical purposes, the latter is easier to run than the former.
%Furthermore, the overall computational complexity until BP convergence provided by Gamarnik et al. is $O(n^6\log n)$ while our analysis provides $O(n^{4.5})$ computations.
%}

%\pagebreak
\section{Proof of Theorem \ref{thm:main}}\label{sec:mainpf}
%In this proof, we first provide a BP convergence in $\left(\frac{w_{\max}}{\rho}+1\right)K$ iterations.
To begin with, we define some necessary notation.
We let $\mathcal P$ denote the polytope of feasible solutions of LP \eqref{eq:lp1}:
$$\mathcal P :=\left\{x\in [0,1]^n\,:\, \psi_\alpha(x_\alpha)=1,~\forall\,\alpha\in F \right\}.$$
Similarly, $\mathcal P_\alpha$ is defined as
$$\mathcal P_\alpha :=\left\{x\in [0,1]^{|\alpha|}\,:\, \psi_\alpha(x_\alpha)=1 \right\}.$$
Now, we state the following key technical lemma.
\begin{lemma}\label{lemma:c4}
%For LP \eqref{eq:lp1}, let $Ax\ge b$ be a LP constraint such that $\|A_i\|_2=1$ for all $i$ where $A_i$ is a $i$-th row of $A$.
There exist universal constants $\eta>0$ for LP \eqref{eq:lp1} such that
if $z\in[0,1]^n$ and $0<\varepsilon<\eta$ satisfy the followings:
\begin{itemize}
\item[P1.]
There exist at most two violated factors for $z$, i.e., %such that 
%\begin{equation*}
$\left|\{\alpha\in F\,:\, z_\alpha\notin \mathcal{P}_\alpha\}\right|\leq 2.$
%\end{equation*}
%where $z = x^* + \varepsilon \eta$,

\item[P2.] For each violated factor $\alpha$,
there exists $i\in \alpha$ such that
$z^\dagger_\alpha\in \mathcal P_\alpha,$ 
where $z^\dagger= z + \varepsilon e_i$
or $z^\dagger = z - \varepsilon e_i$ where
$e_i\in\{0,1\}^n$ is the unit vector whose $i$-th coordinate is $1$,
\end{itemize}
then there exists $z^\ddagger\in \mathcal P$ such that $\|z-z^\ddagger\|_1 \leq \varepsilon K$.\footnote{
$K$ is defined in Theorem \ref{thm:main}.} %where 
%$$K=\max_{\xi\subset\{1,\dots,n\}:|\xi|=n,\det(A_\xi)\ne 0} \|A_\xi\mathbf{1}\|_1$$
%and $\mathbf{1}$ is a vector consisting of ones.
\end{lemma}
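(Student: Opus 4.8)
The plan is to reduce the statement to a classical Hoffman error bound for the polyhedron $\mathcal P$. Recall that $\mathcal P$ is cut out by finitely many linear (in)equalities: the box constraints $0\le x_i\le 1$ together with, for each factor $\alpha$, the rows of $A_\alpha x_\alpha\ge b_\alpha$ and $C_\alpha x_\alpha=d_\alpha$. I would stack all of these into a single system $Ax\le b$ (writing each equality as two inequalities and the box in the usual way). The Hoffman bound then furnishes a constant $L=L(A)$ depending only on the coefficient matrix $A$, and hence only on LP \eqref{eq:lp1}, such that for every $x\in\mathbb{R}^n$ one has $\mathrm{dist}_1(x,\mathcal P)\le L\,\|(Ax-b)_+\|_1$, where $(\cdot)_+$ denotes the nonnegative part, i.e.\ the vector of constraint violations, and $\mathrm{dist}_1$ is the $\ell_1$-distance. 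Since $\mathcal P$ is nonempty in our setting (it contains the LP optimum $x^*$), this bound applies. The whole lemma will then follow once I show that the hypotheses force $\|(Az-b)_+\|_1=O(\varepsilon)$: the constant $K$ becomes $L$ times the implied constant, and $\eta$ may be taken to be any fixed small positive number.

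Bounding the residual is where hypotheses 1 and 2 enter. Because $z\in[0,1]^n$, every box constraint is satisfied exactly and contributes nothing to $(Az-b)_+$, and likewise every non-violated factor contributes nothing; so only the at most two violated factors matter. Fixing such a violated factor $\alpha$, let $i\in\alpha$ and $s\in\{+1,-1\}$ be as in hypothesis 2, so that $z^\dagger=z+s\varepsilon e_i$ satisfies $A_\alpha z^\dagger_\alpha\ge b_\alpha$ and $C_\alpha z^\dagger_\alpha=d_\alpha$. Writing $z^\dagger_\alpha=z_\alpha+s\varepsilon(e_i)_\alpha$ and subtracting, each inequality row $r$ gives $(b_\alpha)_r-(A_\alpha z_\alpha)_r\le s\varepsilon (A_\alpha)_{r,i}\le \varepsilon\,|(A_\alpha)_{r,i}|$, and each equality row gives $|(C_\alpha z_\alpha)_r-(d_\alpha)_r|\le\varepsilon\,|(C_\alpha)_{r,i}|$. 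Hence every violated row is violated by at most $\varepsilon M_0$, where $M_0:=\max_\alpha\max(\|A_\alpha\|_\infty,\|C_\alpha\|_\infty)$ depends only on the LP. Summing over the at most two violated factors and their finitely many rows yields $\|(Az-b)_+\|_1\le \varepsilon M$ for a constant $M$ determined by the LP.

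Combining the two steps gives $\mathrm{dist}_1(z,\mathcal P)\le L\varepsilon M$, so there is a point $z^\ddagger\in\mathcal P$ with $\|z-z^\ddagger\|_1\le \varepsilon K$ for $K:=LM$, which is exactly the claim. The genuinely delicate point will be the Hoffman step: it is what makes the bound uniform, since the constant depends only on the fixed coefficient matrix and not on $z$ or $\varepsilon$, and, more importantly, it lets me avoid repairing the two violated factors by hand. A naive alternative would apply the two single-coordinate corrections of hypothesis 2 simultaneously, but these corrections may act on a shared coordinate with opposing signs, so their superposition need not lie in $\mathcal P$; bounding the global residual and appealing to Hoffman sidesteps this interaction entirely. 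The only remaining care is to state (or reprove) the Hoffman bound with an explicitly uniform constant and to choose $\eta$ small enough that the shifted points $z^\dagger$ stay inside $[0,1]^n$, so that hypothesis 2 is meaningful.
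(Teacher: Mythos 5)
Your proposal is correct, but it takes a genuinely different route from the paper on the key step. Both arguments begin the same way: your residual bound $\|(Az-b)_+\|_1\le\varepsilon M$ is, in the paper's language, the observation that the hypotheses place $z$ in the uniformly relaxed polytope $\mathcal{P}_\varepsilon=\{x : Ax\ge b-\varepsilon\mathbf{1}\}$ (the paper normalizes the rows of $A$ so that $\|A_r\|_2=1$, which plays exactly the role of your constant $M_0$). The difference is what happens next. You invoke the classical Hoffman error bound, with constant $L(A)$ depending only on the coefficient matrix, to get $\mathrm{dist}_1(z,\mathcal{P})\le L M\varepsilon$; the paper instead proves the required estimate $\max_{x\in\mathcal{P}_\varepsilon}\min_{y\in\mathcal{P}}\|x-y\|_1\le\varepsilon K$ from scratch, via a vertex-correspondence argument: vertices of $\mathcal{P}_\varepsilon$ are points $A_\xi^{-1}(b_\xi-\varepsilon\mathbf{1})$ for invertible square submatrices $A_\xi$ (Claim \ref{clm1}), and for all $\varepsilon$ below a threshold $\eta$ every index set $\xi$ arising this way also yields a vertex $A_\xi^{-1}b_\xi$ of $\mathcal{P}$ (Claim \ref{claim:c4}, proved by finiteness of the index sets plus a limiting argument); writing any $x\in\mathcal{P}_\varepsilon$ as a convex combination of its vertices and pairing $A_\xi^{-1}(b_\xi-\varepsilon\mathbf{1})$ with $A_\xi^{-1}b_\xi$ gives the bound with $K=\max_\xi\|A_\xi^{-1}\mathbf{1}\|_1$ --- in effect a hand-made Hoffman constant specialized to the uniform right-hand-side perturbation. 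What each route buys: yours is shorter, needs no threshold at all ($\eta$ may be arbitrary, since Hoffman's constant is uniform in the right-hand side), and extends verbatim to non-uniform perturbations; the paper's is self-contained, produces an explicit constant, and explains why an $\eta$ appears in the statement (its vertex correspondence genuinely can fail for large $\varepsilon$). Your remark that the two single-coordinate corrections of hypothesis 2 cannot simply be superposed --- they may clash on a shared coordinate --- correctly identifies the difficulty that both your Hoffman step and the paper's relaxed-polytope construction are designed to sidestep. To make the write-up complete you would only need to quote a precise form of Hoffman's theorem (constant depending only on $A$ and the chosen norms) and note, as you already do, that $\mathcal{P}\neq\emptyset$ under condition \emph{C1}.
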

The proof of Lemma \ref{lemma:c4} is presented in Section \ref{sec:pflemma:c4}.
Now, from Condition {\em C1}, it follows that %is easy to check that 
there exists $\rho>0$ such that
%Before start the proof, we introduce a useful lemma to use in the proof.
%\begin{lemma}\label{lemma:1}
%	Let $\mathcal{P}$ be the polytope of feasible solutions for arbitrary LP, and let the optimum $x^*$ be unique. Define
	\begin{equation*}
		\rho:=\inf_{x\in\mathcal{P}\setminus x^*} \frac{w\cdot x- w\cdot x^*}{\|x-x^*\|_1}>0.
	\end{equation*}
%	Then $c>0$.
%\end{lemma}
We let $\hat{x}^t\in\{0,1,?\}^n$ denote the BP estimate at the $t$-th iteration for the MAP computation.
We will show that under Conditions {\em C1-C3}, 
$$\hat{x}^t = x^*,\qquad\mbox{for}\quad
t > \left(\frac{w_{\max}}\rho+1\right)K.$$
%where $K$ is from Lemma \ref{lemma:c4}.

Suppose the above statement is false, i.e., there exists $i\in\{1,2,\dots, n\}$ such that
$\hat{x}^t_i \neq x^*_i$ for $t > \left(\frac{w_{\max}}\rho+1\right)K$. 
Under the assumption, we will reach a contradiction.
%, which completes the proof of Theorem \ref{thm:main}.
%\textcolor{red}{To this end, 
%we further assume the case $x^*_i=1$ (i.e., $\hat{x}^t_i\in\{0,?\}$) where the case $x^*_i=0$ can be argued similarly.}
\iffalse
We will prove that given tightness and uniqueness of LP, there is a belief propagation algorithm which correctly converges to the solution.
To prove the correct convergence, we need to show that for large enough iteration $t$, the estimate $\hat{x}^t_e$ satisfies
\begin{equation}
	\begin{aligned}
	&\hat{x}_i=1~\mbox{if}~x^*_i=1\\
	&\hat{x}_i=0~\mbox{if}~x^*_i=0\\
	\end{aligned}
\end{equation}
\fi
To this end, we construct a tree-structured GM $T_i(t)$, popularly known as the computational tree \citep{weiss2001optimality}, as follows: %, using the following scheme
\begin{itemize}
\item[1.] Add $y_i\in \{0,1\}$ as the root variable with variable factor function $e^{-w_i y_i}$.
\item[2.] %\textcolor{red}
{For each leaf variable $y_j$, for each $\alpha\in F_j$ and $\psi_\alpha$  which is not associated with $y_j$ in the current tree-structured GM, add  
a factor function $\psi_\alpha$ as a child of $y_j$.}
\item[3.] %\textcolor{red}
{For each leaf factor $\psi_\alpha$, for each variable $y_k$ such that $k\in\alpha$ and $y_k$ is not associated with $\psi_\alpha$ in 
the current tree-structured GM, add a variable $y_k$ as a child of $\psi_\alpha$ with variable factor function $e^{-w_k y_k}$.}
\item[4.] Repeat Step 2, 3 $t$ times.%%~
\end{itemize}
%%end
%Consider any $t>w_{max}K+K$, where $w_{\max}=\max_i |w_i|$, $K$ is a universal constant in condition 4. Suppose there exists a variable $i$ with $x^*_i=1$ where the estimate is incorrect when $t>w_{max}K+K$: $\hat{x}_i\ne 1$.
%We start with a brief outline of the proof. Let $T_i(t)$ be a computation tree at iteration $t>w_{max}K+K$ with root $x_i$. 
\iffalse
\textcolor{red}{
In general, the initial message of BP is set as 1.
The change of the initial message of BP on GM \eqref{eq:gm1} affects to the weight of the leaf variables.
However, for arbitrary initial message, 
the parent factor of the leaf variable decides the weight of the leaf variable
i.e. the weight of the leaf variable might be asymmetric.
By condition {\em C2}, the leaf variable can have at most two different weights.
Let define $w^\prime_{i1},w^\prime_{i2}$ be weights of the leaf variable $i$ for some initial message.
Let define $w^\prime_{\max}=\max_i (|w_{i1}|,|w_{i2}|)$.
}
\fi
Suppose the initial messages of BP are set by 1, i.e., $m_{j\to\alpha}(\cdot)^0 =1$.
Then, if $\hat{x}^t_i\in\{0,?\}$, it is known \citep{weiss1997belief} 
that there exists a MAP configuration 
$y^{\tt MAP}$ on $T_i(t)$ with $y^{\tt MAP}_i=0$ at
the root variable. A similar conclusion also
holds for the case $\hat{x}^t_i\in\{1,?\}$.
For other initial messages, one can guarantee the same property under
changing weights of leaf variables of the tree-structured GM.
Specifically, 
for a leaf variable $k$ with $|F_k=\{\alpha_1,\alpha_2\}|=2$ and $\alpha_1$
being its parent factor in $T_i(t)$,
one can reset its variable factor by $e^{-w_k^\prime y_k}$, where
%the weight of a leaf variable $i$ which has a parent factor $\alpha_1$ can be formulated as
\begin{equation}
w^\prime_{k}=w_k-\log\frac{\max_{z_{\alpha_2}:z_k=1}\psi_{\alpha_2}(z_{\alpha_2})\Pi_{j\in\alpha_2\setminus k}m_{j\rightarrow\alpha_2}^0(z_j)}
{\max_{z_{\alpha_2}:z_k=0}\psi_{\alpha_2}(z_{\alpha_2})\Pi_{j\in\alpha_2\setminus k}m_{j\rightarrow\alpha_2}^0(z_j)}.\label{eq:otherinitial}
\end{equation}
%One can notice that the weight of the leaf variable might be asymmetric and the weight value depends on the parent factor.
%Let define $w^\prime_{\max}=\max_{i}
%w^\prime_{i}$
This is the reason why our proof of Theorem \ref{thm:main} goes through for arbitrary initial messages.
For notational convenience, we present the proof for the standard initial message of $m_{j\to\alpha}^0(\cdot) =1$,
where it can be naturally generalized to other initial messages using \eqref{eq:otherinitial}.

Now we construct a new valid assignment $y^{\tt NEW}$ on the computational tree $T_i(t)$
as follows:
%As $\hat{x}^i\ne 1$, there is a decision $x^T$ of BP of $T_i(t)$ with root variable $x^T_i$ has the value 0. We will find the new feasible assignment $\widetilde{x}^T$ with $\widetilde{x}^T_i=1$ satisfying $w(\widetilde{x}^T)<w(x^T)$ to lead contradiction.
%We now give the details. First, construct $\widetilde{x}^T$ as follows
\begin{itemize}
\item[1.] Initially, set $y^{\tt NEW} \leftarrow y^{\tt MAP}$.
\item[2.] Update the value of the root variable of $T_i(t)$ by $y^{\tt NEW}_i\leftarrow x^*_i$.
\item[3.] For each child factor $\psi_\alpha$ of root $i\in\alpha$, choose $\gamma\subset\alpha$ according to Condition {\em C3}
and update the associated variable by $y^{\tt NEW}_{j}\leftarrow x^*_{j}~~\forall j\in\gamma$.
\item[4.] Repeat Step 2,3 recursively by substituting $T_i(t)$ by the subtree of $T_i(t)$ of root $j\in\gamma$ until the process stops ({i.e., $\gamma=\{i\}$}) or the leaf of $T_i(t)$ is reached
(i.e., $i$ does not have a child).
\end{itemize}
One can notice that the set of revised variables in Step 2 of the above procedure forms a path structure $Q$ in the tree-structured GM. %Let $P$ be a such path.
%Now, we introduce a lemma to complete the proof.
%\begin{lemma}\label{lemma:main}
%	Suppose LP has unique integral solution and satisfies the condition 1,2,3,4 then, $w(\widetilde{x}^T)<w(x^T)$, provided $t>w_{max}K+K$.
%\end{lemma}
%$x^*_i=0$ but $x^T_i=1$ can be done in similar manner. Above lemma completes the proof.
%\subsection{Proof of Lemma}
Define 
$\zeta_j$ and $\kappa_j$ be the number of copies of $x_j$ in path $Q$ with 
$x^*_{j}=1$ and $x^*_{j}=0$, respectively, where
$\zeta=[\zeta_j] ,\kappa=[\kappa_j]\in \mathbb Z_+^{n}$ .
%\begin{equation*}
%	\begin{aligned}
%		&\zeta_j=\mbox{number of copies of $x_{j}$ in path $Q$ with $x^*_{j}=1$}\\
%		&\kappa_j=\mbox{number of copies of $x_{j}$ in path $Q$ with $x^*_{j}=0$}\\
%	\end{aligned}	
%\end{equation*}
Then, from our construction of $y^{\tt NEW}$, one can observe that 
\begin{align*}
%&y^{\tt NEW}=y^{\tt MAP}+\zeta-\kappa.%\\
&w \cdot 
y^{\tt MAP}-w\cdot
y^{\tt NEW}= w\cdot (\kappa-\zeta).
\end{align*}

%\textcolor{red}{
We consider three cases:
(a) no end of the path $Q$ touches a leaf of $T_i(t)$,
(b) only one end of the path $Q$ touches a leaf of $T_i(t)$,
and (c)
both ends of the path $Q$ touch leaves of $T_i(t)$.
First, consider the case (a).
%first consider the case that no end of the path $Q$ touches a leaf of $T_i(t)$.
If we set $z=x^*+\varepsilon(\kappa-\zeta)$ where $0<\varepsilon<\frac1{2t}$, then due to
our construction of $y^{\tt NEW}$ utilizing Condition {\em C3}, one can observe $z\in\mathcal{P}$.
However, since $x^*$ is the unique optimum of LP \eqref{eq:lp1}, we have
$$w\cdot y^{\tt MAP}-w\cdot y^{\tt NEW}=\frac{1}{\varepsilon}(w\cdot z-w\cdot x^*)>0,$$
which contradicts to the fact that $y^{\tt MAP}$ is a MAP configuration. %}
Next, consider the case (c), where the case (b) can be argued in a similar manner.
In this case, we use Lemma \ref{lemma:c4} by setting
%If we set 
$z=x^*+\varepsilon(\kappa-\zeta)$ where $0<\varepsilon<\min\left\{\frac1{2t},\eta\right\}$ and
%and $\eta$ is from Lemma \ref{lemma:c4},
%then 
one can check that $z$ satisfies Conditions {\em P1, P2} of Lemma \ref{lemma:c4} due to
Conditions {\em C2, C3}.
%, $z$ violates at most two factors, say $\zeta_1,\zeta_2$, which are connected to the changed leaf variable.
%From C2, for each $i=1,2$ there is $j\in\alpha_i$ such that $z^\dagger_{\zeta_i}\in\mathcal{P}_{\zeta_i}$
%where $z^\dagger=z+\varepsilon e_j$ or $z^\dagger=z-\varepsilon e_j$.
Hence, from Lemma \ref{lemma:c4},  % utilizing our construction of $z$.
%\begin{lemma}
%If we choose $\gamma = \kappa-\zeta$, there exists $\varepsilon >0$ such that
%$\gamma,\varepsilon$ satisfy Conditions C4-1, C4-2.
%\end{lemma}
%From Lemma \ref{lemma:c4}, we have 
there exists $z^\ddagger \in \mathcal P$ such that
\begin{align*}
&\|z^\ddagger - z\|_1 \leq \varepsilon K\quad\mbox{and}\quad
\|z^\ddagger - x^*\|_1 \geq \varepsilon (\|\zeta\|_1+\|\kappa\|_1 - K) \geq \varepsilon (t-K).
\end{align*}
%where $z = x^* + \varepsilon (\kappa-\zeta)$.
\iffalse
Choose $0<\varepsilon <\min_{i\in \{1,2,..n\}}(\frac{1}{\zeta_i},\frac{1}{\kappa_i})$. Then, 
\begin{itemize}
	\item $z=x^*+\varepsilon (\kappa -\zeta)\in[0,1]^n$
	\item There exists at most two $\zeta_1,\zeta_2\in F$ does not satisfy
	$$A_{\zeta_l}z_{\zeta_l}\ge b_{\zeta_l},~ C_{\zeta_l}z_{\zeta_l}=d_{\zeta_l},~l\in\{1,2\}$$
	as $A(\kappa -\zeta)\ge 0$ except for leaves by condition 3. i.e. factor condition \eqref{eq1} is violated only for leaf factors
	$\psi_{\zeta_1},\psi_{\zeta_2}\in P$.
	\item We can choose $i^\prime\in \zeta_l,~l\in\{1,2\}$ touches the leaf of $P$. Then
	$x = z - \varepsilon e_{i^\prime} \in [0,1]^n$  satisfies % with respect to $\zeta_1$,
	$$A_{\zeta_l}z_{\zeta_l}\ge b_{\zeta_l},~ C_{\zeta_l}z_{\zeta_l}=d_{\zeta_l}$$
	
\end{itemize}
Now, we can apply condition 4. From condition 4, there exist $z^\prime\in\mathcal{P}$ and universal constant $K$ such that
$$A_{\zeta}z^\prime_{\zeta}\ge b_{\zeta},~ C_{\zeta}z^\prime_{\zeta}=d_{\zeta},\quad \forall \zeta\in F.$$
From lemma\eqref{lemma:1}, 
\fi
Hence, it follows that
\iffalse
\begin{equation*}
\begin{split}
	0<\delta&\le\frac{w(z^\ddagger)-w(x^*)}{\|z^\ddagger-x^*\|_1}\\
	&\le\frac{w(z)+\varepsilon w_{\max} K-w(x^*)}{\varepsilon(2t-K)}\\
	&=\frac{\varepsilon w(\kappa-\zeta)+\varepsilon w_{max}K}{
\varepsilon(2t-K)}\\
&=\frac{w(\kappa-\zeta)+w_{max}K}{2t-K}
\end{split}
\end{equation*}

Therefore, we can bound $w(\kappa-\zeta)$ as
\begin{equation*}
	w(\kappa-\zeta)\ge 2\delta t -(w_{\max} +\delta) K>0~
\end{equation*}
for $t > K (w_{\max}/\delta+1)$.
The above inequality leads to the contradiction to the fact that $y^{\tt MAP}$ is a MAP configuration in $T_i(t)$ since
$$ w\cdot y^{\tt NEW} = w (\zeta-\kappa) + w\cdot y^{\tt MAP} < w\cdot y^{\tt MAP}.$$
This completes the proof of Theorem \ref{thm:main}.
\fi
\begin{equation*}
\begin{split}
	0<\rho&\le\frac{w\cdot z^\ddagger-w\cdot x^*}{\|z^\ddagger-x^*\|_1}\\
	&\le\frac{w\cdot z+\varepsilon w_{\max} K-w\cdot x^*}{\varepsilon(t-K)}\\
	&=\frac{\varepsilon w\cdot (\kappa-\zeta)+\varepsilon w_{\max}K}{
\varepsilon(t-K)}\\
&=\frac{w\cdot(\kappa-\zeta)+w_{\max}K}{t-K}.
\end{split}
\end{equation*}
%We can bound $w\cdot (\kappa-\zeta)$ 
Furthermore, if $t > \left(\frac{w_{\max}}{\rho}+1\right)K$, the above inequality implies that
\begin{align*}
	w\cdot y^{\tt MAP}-w\cdot y^{\tt NEW}
&=w\cdot (\kappa-\zeta)\\
&\ge \rho t -(w_{\max} +\rho) K~>~0.
\end{align*}
This is the contradiction to the fact that
$y^{\tt MAP}$ is a MAP configuration. % of $T_i(t)$.
\iffalse
and therefore $w\cdot y^{\tt MAP}-w\cdot y^{\tt NEW}>0$.
From the choice of $y^{\tt MAP}$, it must be $\Pr[Y=y^{\tt MAP}]\ge\Pr[Y=y^{\tt NEW}]$.
However, as $w\cdot (\kappa-\zeta)>0$ 
\begin{align*}
\Pr[Y=y^{\tt NEW}]&=e^{-w\cdot y^{\tt MAP}+w\cdot (\kappa-\zeta)}\\
&>e^{-w\cdot y^{\tt MAP}}\\
&=\Pr[Y=y^{\tt MAP}]
\end{align*}
and it leads to contradiction.
Therefore, $\hat{x}^t=1$ and 
\fi
%This provides a BP convergence in $\left(\frac{w_{\max}}\rho+1\right)K$ iterations, and
%The proof of $K\leq n^{2.5}$ for some specially structured $A$ described in Theorem \ref{thm:main}
%is given in the following section.
%This 
This completes the proof of Theorem \ref{thm:main}.
\subsection{Proof of Lemma \ref{lemma:c4}}\label{sec:pflemma:c4}
%Without loss of generality, one can assume that $\|a_i\|_2=1$ where $\{a_i\}$ is the set of row vectors of $A$.
%Let $\alpha_1,\alpha_2$ be violated factors. 
%Then, we have
%\begin{align*}
%z\in\{x\in [0,1]^n|&A_\alpha x_\alpha\ge b_\alpha, A_{\alpha_1}x_{\alpha_1}\ge b_{\alpha_1}-\varepsilon\mathbf{1},\\
%&A_{\alpha_2}x_{\alpha_2}\ge b_{\alpha_2}-\varepsilon\mathbf{1},\alpha\ne\alpha_1,\alpha_2\}
%\end{align*}
%where $\mathbf{1}$ is the vector of ones. 
\iffalse
One can write
$\mathcal{P}=\{x\,:\,Ax\ge b\}\subset [0,1]^n$ for some matrix $A\in \mathbb R^{m\times n}$ and vector $b\in \mathbb R^m$,
where without loss of generality, we can assume that $\|A_i\|_2=1$ where $\{A_i\}$ is the set of row vectors of $A$.
\fi
We first define $\mathcal{P}_{\varepsilon}=\{x\,:\,Ax\ge b-\varepsilon\mathbf{1}\}$,
where $\mathbf{1}$ is the vector of ones.
Then, one can check that $z\in\mathcal{P}_\varepsilon$ for $z,\varepsilon$ satisfying conditions of Lemma \ref{lemma:c4}. 
Now we aim to achieve the following inequality
$${{\tt dist}}(\mathcal{P},\mathcal{P}_\varepsilon):=\max_{x\in\mathcal{P}_\varepsilon}\min_{y\in\mathcal{P}} \|x-y\|_1 \le \varepsilon K,$$
%then, there exists $z^\ddagger\in\mathcal{P}$ such that $$\|z-z^\ddagger\|_1\le\varepsilon K$$
%and 
which leads to the conclusion of  Lemma \ref{lemma:c4}.
%Let $\{a_i\}$ be row vectors of $A$ and $\{b_i\}$ be a corresponding elements of $b$.
To this end, for
$\xi\subset[1,2,\dots, m]$ with $|\xi|=n$,
we again let $A_\xi$ be
% \xi\subset[1,2,\dots, m], |\xi|=n\}$ denote the collection of $n\times n$ invertible
the square sub-matrix of $A$ by choosing
$\xi$-th rows of $A$ and $b_\xi$ is the $n$-dimensional subvector of $b$ corresponding $\xi$.
%Throughout the proof, we only consider $\xi$ such that $A_\xi$ is invertible.
%and $\{b_i\}$ be a collection of corresponding subvectors of $b$.
%We further define 
%$$I_{\mathcal{P}}=\{i\,:\,A_i^{-1}b_i\in\mathcal{P}\}.$$
Using this notation, we first prove the following claim.
\begin{claim}\label{clm1}
If $A_{\xi}$ is invertible and $v_\xi:=(A_{\xi})^{-1} b_{\xi}\in \mathcal P$, then
$v_\xi$ is a vertex of 
%There exists a one-to-one mapping between $I_{\mathcal{P}}$ and
%the set of vertices of 
polytope $\mathcal{P}$.
\end{claim}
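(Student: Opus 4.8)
The plan is to prove Claim~\ref{clm1} by showing that $v_\xi$ is an extreme point of $\mathcal P$, using the standard characterization from polyhedral theory: a point of a polytope is a vertex if and only if it is the unique solution to the subsystem of defining inequalities that it makes tight. Concretely, I would argue that $v_\xi$ cannot be written as a nontrivial convex combination of two distinct points of $\mathcal P$.

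First I would record the key algebraic fact built into the hypotheses: since $A_\xi$ is invertible and $v_\xi = A_\xi^{-1} b_\xi$, we have $A_\xi v_\xi = b_\xi$ exactly, so $v_\xi$ makes the $n$ inequalities indexed by $\xi$ tight (they hold with equality). Because $v_\xi \in \mathcal P$ by assumption, the remaining rows of $A$ are satisfied (with $\ge$). Thus $v_\xi$ is a feasible point at which a set of $n$ linearly independent constraint rows---namely the rows of $A_\xi$, which are independent precisely because $A_\xi$ is invertible---are active.

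Next I would invoke the rank argument. Suppose $v_\xi = \lambda u + (1-\lambda) w$ for some $u,w \in \mathcal P$ and $\lambda \in (0,1)$. For each row $A_j$ with $j \in \xi$ we have $A_j u \ge b_j$, $A_j w \ge b_j$, and $\lambda A_j u + (1-\lambda) A_j w = A_j v_\xi = b_j$; since $\lambda, 1-\lambda > 0$ this forces $A_j u = A_j w = b_j$ for every $j \in \xi$. Hence both $u$ and $w$ satisfy the full square system $A_\xi x = b_\xi$, and invertibility of $A_\xi$ gives $u = w = v_\xi$. Therefore $v_\xi$ admits no nontrivial convex decomposition and is a vertex of $\mathcal P$.

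I do not expect a serious obstacle here, as the statement is the classical equivalence between algebraic basic feasible solutions and geometric vertices; the only thing to be careful about is keeping the feasibility hypothesis $v_\xi \in \mathcal P$ in play (without it $v_\xi$ would merely be a basic solution, possibly infeasible and hence not a vertex), and making sure the linear independence supplied by invertibility of $A_\xi$ is exactly what pins down $u$ and $w$. If one prefers to avoid citing the general theorem, the self-contained convex-combination argument above suffices and is short.
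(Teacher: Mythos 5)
Your proof is correct, but it takes a genuinely different (and cleaner) route than the paper's. You force tightness at the endpoints of the alleged convex decomposition: from $A_j u \ge b_j$, $A_j w \ge b_j$ and $\lambda A_j u + (1-\lambda)A_j w = b_j$ with $\lambda \in (0,1)$, you conclude $A_j u = A_j w = b_j$ for every $j \in \xi$, and then invertibility of $A_\xi$ collapses $u = w = v_\xi$, so no nontrivial decomposition exists. The paper instead argues by contradiction along the line through the two alleged distinct points $x \neq y$ with $v_\xi = \lambda x + (1-\lambda)y$, $\lambda \in (0,1/2]$: it considers the auxiliary points $\frac{3\lambda}{2}x + \bigl(1-\frac{3\lambda}{2}\bigr)y$ and $\frac{\lambda}{2}x + \bigl(1-\frac{\lambda}{2}\bigr)y$, both in $\mathcal{P}$ by convexity; invertibility of $A_\xi$ implies the first cannot satisfy $A_\xi z = b_\xi$, so some row $A_i$ is strictly slack there, and reflecting through $v_\xi$ then shows that same row is violated at the second point, contradicting its membership in $\mathcal{P}$. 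Both arguments rest on the same two ingredients --- the convex-combination characterization of a vertex and the invertibility of $A_\xi$ --- but yours is the standard ``basic feasible solution equals extreme point'' argument: it avoids the auxiliary points, the restriction $\lambda \in (0,1/2]$, and the reflection step, and it identifies $u = w = v_\xi$ explicitly rather than merely deriving a contradiction. The paper's route buys nothing extra here; your version is shorter and, as you note, makes transparent exactly where the feasibility hypothesis $v_\xi \in \mathcal{P}$ and the linear independence of the rows of $A_\xi$ are used.
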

\begin{proof}
Suppose $v_\xi$ is not a vertex of $\mathcal P$, %i.e., it is a convex combination of at least two vertices of $\mathcal{P}$ 
i.e. there exist $x,y\in\mathcal{P}$ such that $x\neq y$ and
$v_\xi=\lambda x+(1-\lambda)y$ for some $\lambda\in(0,1/2]$.
Under the assumption, we will reach a contradiction.
%Choose $0<c<\min(\lambda,1-\lambda)$.
Since $\mathcal{P}$ is a convex set, 
\begin{equation}
\frac{3\lambda}2x+\left(1-\frac{3\lambda}2\right)y\in\mathcal{P}.\label{eq1:pfclm1}
\end{equation}
However, as $A_\xi$ is invertible, 
\begin{equation}
A_\xi\left(\frac{3\lambda}2x+\left(1-\frac{3\lambda}2\right)y\right)\ne b_\xi.\label{eq2:pfclm2}
\end{equation}
From \eqref{eq1:pfclm1} and \eqref{eq2:pfclm2}, there exists a row vector $A_{i*}$ of $A_\xi$
and the corresponding entry $b_i$ of $b_\xi$ such that 
$$A_{i*}\cdot \left(\frac{3\lambda}2x+\left(1-\frac{3\lambda}2\right)y\right)>b_i.$$
Using the above inequality and $A_{i*}\cdot (\lambda x+(1-\lambda)y)=b_i,$ one can conclude that
%However, then 
$$A_{i*}\cdot \left(\frac{\lambda}2 x+\left(1-\frac{\lambda}2\right)y\right)<b_i,$$
which contradict to $\frac{\lambda}2x+\left(1-\frac{\lambda}2\right)y\in\mathcal{P}$.
%but $A_i\cdot ((\lambda-c)x+(1-\lambda+c)y)<b_i$ which leads to contradiction.
%Therefore, we can conclude that if $v_\xi\in\mathcal{P}$, then $v_\xi$ is a vertex of $\mathcal{P}$.
This completes the proof of Claim \ref{clm1}.
\end{proof}
%Now we are ready to define
%universal constants $K,\eta$ with other necessary notation:
We also note that if $v$ is a vertex of polytope $\mathcal P$, there exists $\xi$ such that
$A_{\xi}$ is invertible and $v=(A_{\xi})^{-1} b_{\xi}$.
We define the following notation: % index sets for vertices of $\mathcal{P},\mathcal{P}_\varepsilon$
\begin{align*}
%& \qquad\qquad K=\max_\xi \|A_\xi^{-1}\mathbf{1}\|_1\qquad
%\qquad\quad
%\eta=\frac{c}{2K}\\
%&\qquad\quad\quad~~ c=\min_{x\in\{v_\xi\,:\,v_\xi\notin\mathcal{P}\},y\in\mathcal{P}}\|x-y\|_1\\
&
\mathcal I=\{\xi\,:\, (A_\xi)^{-1}b_\xi\in\mathcal{P}\}
\quad\mbox{and}\quad
\mathcal I_{\varepsilon}=\{\xi\,:\,(A_\xi)^{-1}(b_\xi-\varepsilon\mathbf{1})\in\mathcal{P}_\varepsilon\},
\end{align*} 
where Claim \ref{clm1} implies that
$\{v_\xi:=(A_{\xi})^{-1} b_{\xi}\,:\,\xi\in \mathcal I\}$ and $\{u_{\xi,\varepsilon} :=(A_\xi)^{-1}(b_\xi-\varepsilon\mathbf{1})\,:\,\xi\in \mathcal I_{\varepsilon}\}$
are sets of vertices of $\mathcal{P}$ and $\mathcal{P}_\varepsilon$, respectively.
%In addition, we let
%$\mathcal{C}_\varepsilon$ denote the convex hull of $\{u_\xi\,:\,\xi\in \mathcal{I}\}$.
%Let $\{v_i\}$ be a set of vertices of $\mathcal{P}$ corresponding to $I_\mathcal{P}$ 
%and $\{u_j\}$ be a set of vertices of $\mathcal{P}_\varepsilon$ corresponding to $I_{\mathcal{P}_\varepsilon}$.
%Under the above choice of $K,\eta$, 
Using the notation, we show the following claim. 
%Now we are ready to show the existence of $\eta$ for the conclusion of Lemma \ref{lemma:c4}.
\begin{claim}\label{claim:c4}
%There exists a universal constant
%$\eta>0$ such that 
There exists $\eta>0$ such that $\mathcal I_{\varepsilon}\subset \mathcal I$
for all $\varepsilon\in (0,\eta)$.
\end{claim}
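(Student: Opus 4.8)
The plan is to exploit the finiteness of the collection of index sets $\xi$ together with a perturbation argument, and to prove the contrapositive. Since $\xi$ ranges over subsets of $\{1,\dots,m\}$ of size $n$ with $A_\xi$ invertible, there are only finitely many candidates. Hence it suffices to show that for \emph{each} fixed $\xi$ with $A_\xi$ invertible and $\xi\notin\mathcal I$, there is a threshold $\eta_\xi>0$ such that $\xi\notin\mathcal I_\varepsilon$ for all $\varepsilon\in(0,\eta_\xi)$; setting $\eta:=\min_\xi \eta_\xi$ over this finite set then yields $\mathcal I_\varepsilon\subset\mathcal I$ for every $\varepsilon\in(0,\eta)$.

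First I would fix such a $\xi\notin\mathcal I$, so that $v_\xi=A_\xi^{-1}b_\xi\notin\mathcal P$. Because $A_i\cdot v_\xi=b_i$ for every $i\in\xi$ by construction, the infeasibility of $v_\xi$ can only occur at some row $i_0\notin\xi$; that is, there exists $i_0$ with
$$A_{i_0}\cdot v_\xi=b_{i_0}-\delta_\xi,\qquad \delta_\xi>0.$$
The crucial point is that this violation gap $\delta_\xi$ is a fixed positive constant attached to $\xi$, independent of $\varepsilon$.

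Next I would relate the perturbed vertex candidate to $v_\xi$. Since $u_{\xi,\varepsilon}=A_\xi^{-1}(b_\xi-\varepsilon\mathbf 1)=v_\xi-\varepsilon A_\xi^{-1}\mathbf 1$, applying the row $A_{i_0}$ gives
$$A_{i_0}\cdot u_{\xi,\varepsilon}=b_{i_0}-\delta_\xi-\varepsilon\,\big(A_{i_0}\cdot A_\xi^{-1}\mathbf 1\big).$$
To certify $\xi\notin\mathcal I_\varepsilon$ it is enough to check that $u_{\xi,\varepsilon}$ violates constraint $i_0$ of $\mathcal P_\varepsilon$, i.e.\ that $A_{i_0}\cdot u_{\xi,\varepsilon}<b_{i_0}-\varepsilon$. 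Substituting, this reduces to the single scalar inequality
$$\delta_\xi>\varepsilon\,\big(1-A_{i_0}\cdot A_\xi^{-1}\mathbf 1\big).$$
If $1-A_{i_0}\cdot A_\xi^{-1}\mathbf 1\le 0$ this holds for every $\varepsilon>0$; otherwise it holds precisely for $\varepsilon<\delta_\xi/(1-A_{i_0}\cdot A_\xi^{-1}\mathbf 1)$, and I would take $\eta_\xi$ to be this quantity. In either case $u_{\xi,\varepsilon}\notin\mathcal P_\varepsilon$, hence $\xi\notin\mathcal I_\varepsilon$, for all $\varepsilon\in(0,\eta_\xi)$.

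The argument is essentially a continuity-plus-finiteness statement, so I do not expect a serious obstacle; the only point requiring care is that the relaxation loosens each bound by exactly $\varepsilon$ while an infeasible basis carries an $\Omega(1)$ violation $\delta_\xi$, so for small $\varepsilon$ the $O(\varepsilon)$ displacement of the vertex candidate cannot close the gap. Handling the sign of $1-A_{i_0}\cdot A_\xi^{-1}\mathbf 1$, which determines whether the threshold is finite or vacuous, is the only mildly fiddly step, and taking the minimum over the finite family of ``bad'' bases $\xi$ produces the uniform constant $\eta$.
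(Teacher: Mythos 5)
Your proof is correct, but it takes a genuinely different route from the paper's. The paper argues by contradiction with a limiting argument: if no $\eta$ existed, finiteness of the index sets would produce a single $\xi'\notin\mathcal{I}$ belonging to $\mathcal{I}_{\varepsilon_k}$ along a sequence $\varepsilon_k\downarrow 0$; since $u_{\xi',\varepsilon_\ell}\to v_{\xi'}$, each $\mathcal{P}_{\varepsilon_k}$ is closed, and $\mathcal{P}=\bigcap_k\mathcal{P}_{\varepsilon_k}$, one concludes $v_{\xi'}\in\mathcal{P}$, contradicting $\xi'\notin\mathcal{I}$. You instead prove the contrapositive directly and quantitatively: for a bad basis $\xi$, every row in $\xi$ holds with equality at $v_\xi$, so the violation must occur at a row $i_0\notin\xi$ with a fixed gap $\delta_\xi>0$, and since the relaxed candidate $u_{\xi,\varepsilon}=v_\xi-\varepsilon A_\xi^{-1}\mathbf{1}$ drifts only linearly in $\varepsilon$, the same constraint of $\mathcal{P}_\varepsilon$ stays violated once $\varepsilon<\eta_\xi$, with $\eta_\xi$ given by the explicit formula $\delta_\xi/\bigl(1-A_{i_0}\cdot A_\xi^{-1}\mathbf{1}\bigr)$ when that denominator is positive (and no constraint otherwise). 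Both arguments rest on the finiteness of the family of bases, but yours is constructive: it yields a computable lower bound $\eta=\min_\xi\eta_\xi$ on the admissible perturbation, which could in principle sharpen the universal constants $K,\eta$ of Lemma \ref{lemma:c4}, whereas the paper's sequential-compactness argument is shorter on linear algebra but gives no handle on the size of $\eta$. The one point that required care in your version --- that infeasibility of $v_\xi$ can only be caused by rows outside $\xi$, because the box constraints $x\in[0,1]^n$ are already folded into $Ax\ge b$ --- is handled correctly.
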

\begin{proof}
\iffalse
%Now, we will find 
%Previously, we showed that if $v_i=A_i^{-1}b_i\in\mathcal{P}$, then $v_i$ is a vertex of $\mathcal{P}$.
%We will find $\eta$ satisfying $\{A_j^{-1}(b_j-\varepsilon\mathbf{1})|A_j^{-1}b_j\notin\mathcal{P}\}\cap\mathcal{P}_\varepsilon=\emptyset$
%for $\varepsilon<\eta$.
To begin with, define %Let 
\begin{align*}
&c=\min_{x\in\{v_\xi\,:\,v_\xi\notin\mathcal{P}\},y\in\mathcal{P}}\|x-y\|_1
\end{align*}
Set $\eta=c/2K$. 
Let define $\mathcal{C}_\varepsilon$ be a convex hull of $\{A_i^{-1}(b_i-\varepsilon\mathbf{1})|i\in I_\mathcal{P}\}$.
\fi
%Then for $\varepsilon<\eta$,
%Since $\mathcal{C}_\varepsilon$ is a polytope, 
Suppose $\eta>0$ satisfying the conclusion of Claim \ref{claim:c4} does not exist.
Then, there exists a strictly decreasing sequence $\{\varepsilon_k>0:k=1,2,\dots\}$ converges to 0 such that
$\mathcal{I}_{\varepsilon_k}\cap\{\xi\,:\,\xi\notin\mathcal{I}\}\neq\emptyset .$
Since $|\{\xi:\xi\subset [1,2,\dots, m]\}|<\infty$, there exists $\xi^\prime$ such that 
\begin{equation}
|\mathcal K:=\{k\,:\,\xi^\prime\in\mathcal{I}_{\varepsilon_k}\cap\{\xi\,:\,\xi\notin\mathcal{I}\}\}|=\infty .\label{eq1:pfclaimc4}
\end{equation}
%Let denote $\{\varepsilon^\prime_k\}$ be a subsequence of $\{\varepsilon_k\}$ 
%such that 
%$$\xi^\prime\in\mathcal{I}_{\varepsilon_{k^\prime}}\cap\{\xi\,:\,\xi\notin\mathcal{I}\}.$$
For any $k\in \mathcal K$,
%\eqref{eq1:pfclaimc4} implies that
observe that
the sequence $\{u_{\xi^\prime,\varepsilon_\ell}:\ell\geq k,\ell\in \mathcal K\}$ converges to $v_{\xi^\prime}$.
Furthermore, all points in the sequence are in $\mathcal{P}_{\varepsilon_k}$
since  $\mathcal{P}_{\varepsilon_\ell}\subset\mathcal{P}_{\varepsilon_k}$ for any $\ell\geq k$. % with $\ell\in \mathcal K$, 
%which
%implies that $\{u_{\xi^\prime,\varepsilon_i}\,:\, i>j\}\subset\mathcal{P}_{\varepsilon_j}$.
Therefore,
%$\mathcal{P}_{\varepsilon^\prime_k}$ is a closed set for any $k$ and the sequence $\{u_{\xi^\prime,\varepsilon_k}\}$ converges to $v_{\xi^\prime}$,
one can conclude that $v_{\xi^\prime}\in\mathcal{P}_{\varepsilon_k}~\mbox{for all}~k\in \mathcal K,$
where we additionally use the fact that $\mathcal{P}_{\varepsilon_k}$ is a closed set.
Because $\mathcal{P}=\bigcap_{k\in \mathcal K}\mathcal{P}_{\varepsilon_k}$, it must be that $v_{\xi^\prime}\in\mathcal{P}$, i.e.,
$v_{\xi^\prime}$ must be a vertex of $\mathcal{P}$ from Claim \ref{clm1}.
This contradicts to the fact
%However, we chose $\xi^\prime$ satisfying 
$\xi^\prime\in\{\xi\,:\,\xi\notin\mathcal{I}\}$.
This completes the proof of Claim \ref{claim:c4}.
\end{proof}
From the above claim, we observe that
any $x\in\mathcal{P}_\varepsilon$ can be expressed as a convex combination of $\{u_{\xi,\varepsilon}\,:\,\xi\in \mathcal{I}\}$,
i.e., $x=\sum_{\xi\in \mathcal{I}}\lambda_\xi u_{\xi,\varepsilon}$ with $\sum_{\xi\in \mathcal{I}}\lambda_\xi =1$ and $\lambda_\xi\geq 0$.
For all $\varepsilon\in(0,\eta)$ for $\eta>0$ in Claim \ref{claim:c4}, one can conclude that
\begin{align*}
{\tt dist}(\mathcal{P},\mathcal{P}_\varepsilon)
&\le\max_{x\in\mathcal{P}_\varepsilon}\|\sum_{\xi\in\mathcal{I}}\lambda_\xi u_{\xi,\varepsilon}-\sum_{\xi\in\mathcal{I}}\lambda_\xi v_\xi\|_1\\
&=\max_{x\in\mathcal{P}_\varepsilon}\varepsilon\|\sum_{\xi\in\mathcal{I}}\lambda_\xi (A_\xi)^{-1}\mathbf{1}\|_1\\
&\le\varepsilon \max_{\xi\in\mathcal{I}} \|(A_\xi)^{-1}\mathbf{1}\|_1\\
&\le \varepsilon K.
\end{align*}
%where $K\ge\max_{\xi\in\mathcal{I}} \|(A_\xi)^{-1}\mathbf{1}\|_1$.
This completes the proof of Lemma \ref{lemma:c4}.

\section{Conclusion}

%In general, 
The BP algorithm has been the most popular algorithm for solving inference problems arising graphical models,
where its distributed implementation, associated ease of programming
and strong parallelization potential are the main reasons for its growing popularity.
In this paper, we aim for designing BP algorithms solving LPs,
and provide sufficient conditions for its correctness and convergence.
We believe that our results provide
new interesting directions on designing efficient distributed (and parallel) solvers for large-scale LPs.
%\subsection*{Acknowledgements.}
%We would like to acknowledge the support of the AOARD project, FA2386-14-1-4058.

\iffalse
where is fast and can be easily parallelized.
Solving LP by BP may reduce the computational cost in many practical applications.
Previously, BP analysis were very sensitive to the problem setup and
there was no general conditions on BP correctness and convergence.
Our theoretical result contributes to unify the general framework of BP analysis covers prior results.
In section \ref{sec:applications}, we show various applications of our result.
We show that our theoretical result covers previous works \cite{sanghavi2011belief,NIPS2013_4949,bayati2007belief,ruozzi2008st,huang2007loopy}
and new problems such as traveling salesman, longest path.
Furthermore, in \ref{subsec:matching}, we provide examples that BP can solve LP even if LP is not tight
while prior works on BP correctness and convergence require the LP tightness.
Our result broaden the class of LPs that solution can be found by BP.
We provide the new direction that BP can solve exact LP by using graphical transformation.
\fi

\bibliographystyle{plain}
\bibliography{reference}

\end{document}